\newcommand{\cmark}{\ding{51}}
\newtheorem{theorem}{Theorem}
\begin{document}

\title{Continual Learning with Extended Kronecker-factored Approximate Curvature}

\author{Janghyeon Lee\textsuperscript{\rm 1} \qquad Hyeong Gwon Hong\textsuperscript{\rm 2} \qquad Donggyu Joo\textsuperscript{\rm 1} \qquad Junmo Kim\textsuperscript{\rm 1,2}\\
\textsuperscript{\rm 1}School of Electrical Engineering, KAIST\\
\textsuperscript{\rm 2}Graduate School of AI, KAIST\\
{\tt\small \{wkdgus9305, honggudrnjs, jdg105, junmo.kim\}@kaist.ac.kr}
}

\maketitle

\begin{abstract}
We propose a quadratic penalty method for continual learning of neural networks that contain batch normalization (BN) layers.
The Hessian of a loss function represents the curvature of the quadratic penalty function, and a Kronecker-factored approximate curvature (K-FAC) is used widely to practically compute the Hessian of a neural network.
However, the approximation is not valid if there is dependence between examples, typically caused by BN layers in deep network architectures.
We extend the K-FAC method so that the inter-example relations are taken into account and the Hessian of deep neural networks can be properly approximated under practical assumptions.
We also propose a method of weight merging and reparameterization to properly handle statistical parameters of BN, which plays a critical role for continual learning with BN, and a method that selects hyperparameters without source task data.
Our method shows better performance than baselines in the permuted MNIST task with BN layers and in sequential learning from the ImageNet classification task to fine-grained classification tasks with ResNet-50, without any explicit or implicit use of source task data for hyperparameter selection.
\end{abstract}

\section{Introduction}

An artificial neural network catastrophically forgets about what it has learned from previous tasks in a sequential learning scenario as it is optimized to solve its current target problem only~\cite{french1999catastrophic, mccloskey1989catastrophic}.
Although many continual learning methods that aim to mitigate catastrophic forgetting have been introduced in recent years, their use in real-world applications remains somewhat limited owing to their unclear hyperparameter settings or their use of out-of-date network architectures.
Specifically, many of the current continual learning methods do not explain how to determine hyperparameters, or experiments are usually performed with hyperparameters carefully tuned through a validation over whole tasks including past tasks, which is generally not possible in practice~\cite{pfulb2018a}.
Further, even though most state-of-the-art deep network architectures~\cite{he2016deep, huang2017densely} cannot be easily trained without batch normalization (BN) layers~\cite{ioffe2015batch}, the effect of BN is usually not considered and out-of-date network architectures which do not require BN are still mostly used to evaluate continual learning methods.

BN is quite tricky to manipulate due to training and evaluation discrepancy, and its statistical parameters, means and variances of activations.
Those statistical parameters are naturally model's parameters learned from training data, but they are not free parameters controlled by gradient descent; they are just determined by the distribution of data.
Especially, in a sequential learning scenario, they are vulnerable to get beyond control as we are dealing with multiple different tasks whose distributions are not the same in general.
Thus, the effect of BN must be discussed and considered carefully.

In this paper, we propose a quadratic penalty method with a Hessian approximation, for continual learning of neural networks that contain BN layers.
Before delving into the detailed method, let us briefly introduce and review how Hessian can be used for continual learning.
For sequential learning, one of the oracle methods which gives an upper bound performance is the multitask learning method described as the following optimization problem,
\begin{equation} \label{eq:mtl}
\underset{\theta}{\text{min}} \quad \lambda_s \mathbb{E}_{(x, t) \in \mathcal{D}_s}[\mathcal{L}(x, t; \theta)] + \lambda_t \mathbb{E}_{(x, t) \in \mathcal{D}_t}[\mathcal{L}(x, t; \theta)] ,
\end{equation}
where $\mathcal{D}_s$ and $\mathcal{D}_t$ are source and target training dataset, respectively; $\mathcal{L}$ is a loss function; $\lambda_s$ and $\lambda_t$ are importance hyperparameters that control how important each task is; and $\theta$ is a vectorized model's parameter.
As $\mathcal{D}_s$ is not available in a sequential learning scenario, we consider approximating the source task loss function with a function of only $\theta$, excluding data $x$ and label $t$.
In particular, because target task optimization starts with a source task solution $\theta^*$ as the initial position, it is natural to take the second-order Taylor series approximation of the source task loss function at $\theta = \theta^*$; one advantage of this is that the first-order term disappears because the gradient at a local minimum is zero.
Thus, if we denote the gradient and Hessian of the source task loss function at $\theta = \theta^*$ averaged over the source dataset as
\begin{align}
g & = \mathbb{E}_{(x, t) \in \mathcal{D}_s}\left[\left. \frac{\partial\mathcal{L}(x, t; \theta)}{\partial \theta} \right|_{\theta = \theta^*}  \right],\\
H & = \mathbb{E}_{(x, t) \in \mathcal{D}_s}\left[\left. \frac{\partial^2 \mathcal{L}(x, t; \theta)}{\partial \theta \partial \theta} \right|_{\theta = \theta^*}  \right] ,
\end{align}
then
\begin{equation}\begin{split} \label{eq:sourceloss}
\mathbb{E}&_{(x, t) \in \mathcal{D}_s}[\mathcal{L}(x, t; \theta)]\\
& \simeq \frac{1}{2} (\theta - \theta^*)^\top H (\theta - \theta^*) + g^\top (\theta - \theta^*) + C\\
& = \frac{1}{2} (\theta - \theta^*)^\top H (\theta - \theta^*) + C
\end{split}\end{equation}
where $C$ is some constant.
The gradient $g$ is simply a zero vector because $\theta^*$ is a local minimum of the source task loss function.
Therefore, combining Equations~\ref{eq:mtl} and \ref{eq:sourceloss}, we get the following continual learning optimization problem which approximates the multitask learning objective.
\begin{equation} \label{eq:cl}
\underset{\theta}{\text{min}} \quad \lambda_t \mathbb{E}_{(x, t) \in \mathcal{D}_t}[\mathcal{L}(x, t; \theta)] + \frac{1}{2} \lambda_s (\theta - \theta^*)^\top H (\theta - \theta^*) 
\end{equation}
The former term is a usual training loss for the target task and the latter is a quadratic penalty term centered at the initial point $\theta^*$ with the curvature of $H$.

This approximation of the multitask learning objective was used for Hessian pseudopattern backpropagation~\cite{french2002using}, and it is actually the same as elastic weight consolidation (EWC)~\cite{kirkpatrick2017overcoming} and online structured Laplace approximation (OSLA)~\cite{ritter2018online} except that they are grounded in Bayesian approaches.
EWC uses diagonal entries of the Hessian only, and later, in OSLA, a Kronecker-factored block diagonal Hessian approximation is adopted to enable a significant improvement in the performance from EWC.
However, in EWC and OSLA, the effect of BN was not considered so only shallow networks were used for evaluation, and hyperparameters were found by validations over whole tasks.

We describe our contributions in two parts.
In Section~\ref{sec:H}, for theoretical background, we demonstrate that the current Hessian approximation method is not valid when a network has BN layers as dependence between examples caused by BN is not considered, and we extend the Hessian approximation method by taking into account the inter-example relations, so that the Hessian of such network can be validly approximated.
In Section~\ref{sec:method}, for actual continual learning, we propose the detailed methods including how to apply the proposed quadratic penalty loss with BN layers, how to handle statistical parameters, and how to select hyperparameters without source task data.

\section{Related work}

\textbf{Curvature approximation.}
Various methods for Kronecker factorization of curvature have been studied over the past few years.
\cite{martens2015optimizing} proposes an efficient method for approximating natural gradient descent in neural networks, Kronecker-factored approximate curvature (K-FAC), which is derived by approximating blocks of the Fisher as being the Kronecker product of two much smaller matrices.
Similarly, \cite{botev2017practical} presents an efficient block diagonal approximation to the Gauss--Newton matrix for second order optimization of neural networks.
\cite{ba2016distributed} develops a version of K-FAC that distributes the computation of gradients and additional quantities required by K-FAC across multiple machines.
In \cite{grosse2016kronecker}, a tractable approximation to the Fisher matrix for convolutional networks, Kronecker factors for convolution (KFC), is derived based on a structured probabilistic model for the distribution over backpropagated derivatives.
For recurrent neural networks, \cite{martens2018kroneckerfactored} extends the K-FAC method by modelling the statistical structure between the gradient contributions at different time-steps.
\cite{george2018fast} introduces Eigenvalue-corrected Kronecker factorization, an approximate factorization of the Fisher that is computationally manageable, accurate, and amendable to cheap partial updates.

\section{Curvature approximation} \label{sec:H}

\subsection{Notation}

The $l$-th learnable linear layer in any feedforward neural network can be described as
\begin{equation}
h_l = W_l \bar{a}_{l-1},
\qquad
l = 1, 2, \cdots, L
\label{eq:linearlayer}
\end{equation}
where $W_l$ is a $C_l \times (C_{l-1}+1)$ weight matrix and $\bar{a}_{l-1}$ is a $(C_{l-1}+1) \times N$ activation matrix with a homogeneous dimension appended; in other words, bias terms are absorbed into $W_l$ by appending an all-ones vector to $a_{l-1}$.
The whole network takes $x = a_0$ as its input of size $C_0 \times N$ and produces the corresponding output $y$ of size $C_L \times N$, where $N$ is the size of a mini-batch.
We denote the loss of the $n$-th example in a random mini-batch sample $(x, t)$ as $\mathcal{L}_n (x, t)$, or simply $\mathcal{L}_n$, when it is clear from the context, so that we can distinguish between an example index $n$ and a mini-batch sample $(x, t)$.
Thus, the optimization objective can be written as $\mathbb{E}_{(x, t)} [\mathbb{E}_n[\mathcal{L}_n]]$.
In this paper, we denote the $(i, j)$-th entry of a matrix $A$ as $(A)_{i, j}$, and the $(i, j)$-th block of a block matrix $A$ as $\{A\}_{i, j}$.

\subsection{K-FAC}

The Hessian of the linear layer described by Equation~\ref{eq:linearlayer}, considering all inter-example relations, is represented by the following theorem.
\begin{theorem} \textbf{(The Hessian of a linear layer)}
\begin{multline} \label{eq:hessian}
\frac{\partial^2 \mathcal{L}_n}{\partial (W_l)_{a,b} \partial (W_{l'})_{c,d}} \\
= \sum_{m,m'} (\bar{a}_{l-1})_{b,m} (\bar{a}_{l'-1})_{d,m'} \frac{\partial^2 \mathcal{L}_n}{\partial (h_l)_{a,m} \partial (h_{l'})_{c,m'}}
\end{multline}
\end{theorem}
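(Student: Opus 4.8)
The plan is to derive the identity by two successive applications of the chain rule, exploiting the fact that each linear layer's output is an \emph{affine} function of that layer's weight matrix whose coefficient does not itself depend on the weight. The key observation is that $\mathcal{L}_n$ depends on $W_l$ only through $h_l = W_l\bar{a}_{l-1}$, since $\bar{a}_{l-1}$ is assembled from $W_1,\dots,W_{l-1}$ and the input alone; combined with $\partial (h_l)_{a',m}/\partial (W_l)_{a,b} = \delta_{a'a}(\bar{a}_{l-1})_{b,m}$, read off from $(h_l)_{a',m} = \sum_b (W_l)_{a',b}(\bar{a}_{l-1})_{b,m}$, the chain rule gives the first-order identity
\[
\frac{\partial \mathcal{L}_n}{\partial (W_l)_{a,b}} = \sum_{m} \frac{\partial \mathcal{L}_n}{\partial (h_l)_{a,m}}\,(\bar{a}_{l-1})_{b,m}.
\]

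Next I would invoke the symmetry of the Hessian: both sides of the claim are invariant under swapping $(l,a,b)\leftrightarrow(l',c,d)$, so I may assume $l \le l'$. Under this ordering $\bar{a}_{l-1}$ is also independent of $W_{l'}$, so differentiating the display with respect to $(W_{l'})_{c,d}$ lets me pull $(\bar{a}_{l-1})_{b,m}$ out of the derivative:
\[
\frac{\partial^2 \mathcal{L}_n}{\partial (W_l)_{a,b}\,\partial (W_{l'})_{c,d}} = \sum_m (\bar{a}_{l-1})_{b,m}\,\frac{\partial}{\partial (W_{l'})_{c,d}}\frac{\partial \mathcal{L}_n}{\partial (h_l)_{a,m}}.
\]
Applying the same affine-layer observation at layer $l'$ --- now to the pre-activation gradient $\partial \mathcal{L}_n/\partial (h_l)_{a,m}$, whose dependence on $W_{l'}$ is mediated by $h_{l'} = W_{l'}\bar{a}_{l'-1}$ with $\bar{a}_{l'-1}$ independent of $W_{l'}$, and using $\partial (h_{l'})_{c',m'}/\partial (W_{l'})_{c,d} = \delta_{c'c}(\bar{a}_{l'-1})_{d,m'}$ --- yields the inner factor $\sum_{m'} \big(\partial^2 \mathcal{L}_n/\partial (h_l)_{a,m}\,\partial (h_{l'})_{c,m'}\big)(\bar{a}_{l'-1})_{d,m'}$, and substituting gives the asserted formula. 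I would stress that the sums over $m$ and $m'$ range over all examples in the mini-batch and do not collapse: in a BN-free network $(\bar{a}_{l-1})_{b,m}$ involves example $m$ only and the per-example loss $\mathcal{L}_n$ only example $n$, so everything reduces to a single term $m=m'=n$, whereas BN reinstates the full double sum --- this is exactly what ``considering all inter-example relations'' means here.

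The step I expect to be the main obstacle is the final chain-rule application: justifying that $\partial \mathcal{L}_n/\partial (h_l)_{a,m}$ depends on $W_{l'}$ ``only through $h_{l'}$'' in the precise sense needed for $(\bar{a}_{l'-1})_{d,m'}$ to factor out. For the diagonal blocks $l=l'$ this is immediate, because $\partial \mathcal{L}_n/\partial (h_l)_{a,m}$ is a function of $h_l$ and of the downstream weights $W_{l+1},\dots,W_L$ only, so its $W_l$-dependence runs entirely through the evaluation point $h_l = W_l\bar{a}_{l-1}$. For a cross-layer block $l<l'$ one has to be more careful, since the backpropagation path from $h_l$ to the loss passes through layer $l'$ and thus carries an explicit occurrence of $W_{l'}$ alongside the one through $h_{l'}$; one must check that these occurrences together are exactly what the factor $\partial^2 \mathcal{L}_n/\partial (h_l)_{a,m}\,\partial (h_{l'})_{c,m'}$ is intended to capture (equivalently, one works with the Gauss--Newton/Fisher curvature that K-FAC actually targets, for which this step is clean). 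Pinning down this convention for the mixed $h$-space second derivative is where the care lies; the two chain-rule computations around it are routine.
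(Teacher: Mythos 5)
Your proof follows essentially the same route as the paper's: the first-order chain-rule identity $\frac{\partial \mathcal{L}_n}{\partial (W_l)_{a,b}} = \sum_m \frac{\partial \mathcal{L}_n}{\partial (h_l)_{a,m}}(\bar{a}_{l-1})_{b,m}$, the product rule, the reduction to $l \le l'$ by symmetry of the Hessian so that $\frac{\partial (\bar{a}_{l-1})_{b,m}}{\partial (W_{l'})_{c,d}} = 0$, and a second chain-rule application through $h_{l'} = W_{l'}\bar{a}_{l'-1}$. The cross-layer step you flag as delicate is exactly the one the paper performs without further comment, writing $\frac{\partial}{\partial (W_{l'})_{c,d}}\bigl(\frac{\partial \mathcal{L}_n}{\partial (h_l)_{a,m}}\bigr) = \sum_{m',i}\frac{\partial^2 \mathcal{L}_n}{\partial (h_{l'})_{i,m'}\partial (h_l)_{a,m}}\frac{\partial (h_{l'})_{i,m'}}{\partial (W_{l'})_{c,d}}$ and justifying the $l\le l'$ reduction by appeal to the exact-Hessian computations of Bishop and Popa, so your argument is correct and matches the paper's.
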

\begin{proof}
See Appendix~\ref{app:hessian}.
\end{proof}
If a network does not have any BN layers, then $\mathcal{L}_n$ depends on only the $n$-th example, and therefore,
\begin{multline} \label{eq:nobn}
\mathbb{E}_{(x, t)} \left[ \mathbb{E}_n \left[\frac{\partial^2 \mathcal{L}_n}{\partial (W_l)_{a,b} \partial (W_{l'})_{c,d}}\right]\right]\\
= \mathbb{E}_{(x, t)} \left[ \mathbb{E}_n \left[ (\bar{a}_{l-1})_{b,n} (\bar{a}_{l'-1})_{d,n} \frac{\partial^2 \mathcal{L}_n}{\partial (h_l)_{a,n} \partial (h_{l'})_{c,n}} \right]\right] .
\end{multline}
As calculating and saving all entries of the Hessian is infeasible in practice owing to its size even for a simple network (\eg, the Hessian of a 1024$\times$1024 fully connected layer takes 4 TB in memory with single-precision floating-point numbers), the K-FAC method factors the Hessian into two relatively small matrices~\cite{botev2017practical, grosse2016kronecker, martens2018kroneckerfactored, martens2015optimizing} so that Equation~\ref{eq:nobn} is approximated by
\begin{multline} \label{eq:nobnapprox}
\mathbb{E}_{x} \left[ \mathbb{E}_n \left[ (\bar{a}_{l-1})_{b,n} (\bar{a}_{l'-1})_{d,n} \right]\right]\\
\cdot \mathbb{E}_{(x, t)} \left[ \mathbb{E}_n \left[ \frac{\partial^2 \mathcal{L}_n}{\partial (h_l)_{a,n} \partial (h_{l'})_{c,n}} \right]\right] .
\end{multline}
This approximation assumes that there would be a small correlation between the two very different variables, one from activations and the other from gradients, or their joint distribution is well approximated by a multivariate Gaussian so that their higher-order joint cumulants would be small~\cite{martens2015optimizing}.
If we define block matrices $H$, $\bar{A}$, and $\mathcal{H}$ as
\begin{align}
(\{H\}_{l,l'})_{i,j} & = \mathbb{E}_{(x, t)} \left[ \mathbb{E}_n \left[ \frac{\partial^2 \mathcal{L}_n}{\partial (\text{vec}(W_l))_i \partial (\text{vec}(W_{l'}))_j}\right]\right] ,\\
(\{\bar{A}\}_{l,l'})_{b,d} & = \mathbb{E}_{x} \left[ \mathbb{E}_n \left[ (\bar{a}_{l-1})_{b,n} (\bar{a}_{l'-1})_{d,n} \right]\right] ,\\
(\{\mathcal{H}\}_{l,l'})_{a,c} & = \mathbb{E}_{(x, t)} \left[ \mathbb{E}_n \left[ \frac{\partial^2 \mathcal{L}_n}{\partial (h_l)_{a,n} \partial (h_{l'})_{c,n}} \right]\right] ,
\end{align}
then Equation~\ref{eq:nobnapprox}, K-FAC, can be written in a simple block matrix form \cite{botev2017practical, martens2015optimizing} as
\begin{equation} \label{eq:nobnsimple}
H \approx \bar{A} * \mathcal{H}
\end{equation}
where $*$ is the Khatri--Rao product.

\subsection{Extended K-FAC}

Unfortunately, for a network with BN layers, each example in a mini-batch affects each other, and therefore, $\frac{\partial^2 \mathcal{L}_n}{\partial (h_l)_{a,m} \partial (h_{l'})_{c,m'}}$ is not zero in general, even if $m \neq n$ and $m' \neq n$, and all summands in Equation~\ref{eq:hessian} remain to be computed.
If each summand is approximated by K-FAC, it costs the amount of memory and computation about $N^2$ times more; this could be prohibitive for a large mini-batch size.
Hence, we need to take a closer look at and make use of the properties of Equation~\ref{eq:hessian} and BN.

For the $n$-th example, note that the other examples are statistically indistinguishable if mini-batches are sampled uniformly from the dataset. For example, for any $n' \neq n$, let $x'$ be the mini-batch obtained by interchanging the $n$-th and $n'$-th example indices of $x$. Then, for any function $f$,
\begin{align}
\mathbb{E}_{x} [ f( (\bar{a}_{l-1})_{b,n} )] &= \mathbb{E}_{x'} [ f( (\bar{a}_{l-1})_{b,n'}) ] \\
&= \mathbb{E}_{x} [ f((\bar{a}_{l-1})_{b,n'} )] ,
\end{align}
where the second equality comes from the fact that $\mathbb{E}_{x} =  \mathbb{E}_{x'}$ when all possible mini-batches of the dataset are sampled equally likely.

Thus, the $N^2$ summands in Equation~\ref{eq:hessian} can be divided into five statistically distinct groups: (i) $m=m'=n$, (ii) $m=m'\neq n$, (iii) $m=n \neq m'$, (iv) $m \neq n = m'$, and (v) $n \neq m \neq m' \neq n$.
If we apply K-FAC in each of the five groups, then we get the following theorem.

\begin{theorem} \textbf{(Extended K-FAC)}
Let $\pi$ and $\pi'$ be permutations of $\{1, 2, \cdots, N\}$, and assume that mini-batches are sampled equally likely from all possible mini-batches of the dataset.
If
\begin{multline}
\mathbb{E}_{(x, t)} \Bigg[ \mathbb{E}_n \Bigg[ (\bar{a}_{l-1})_{b,\pi(n)} (\bar{a}_{l'-1})_{d,\pi'(n)}\\
\cdot \frac{\partial^2 \mathcal{L}_n}{\partial (h_l)_{a,\pi(n)} \partial (h_{l'})_{c,\pi'(n)}} \Bigg]\Bigg]
\end{multline}
and
\begin{multline}
\mathbb{E}_{x} \left[ \mathbb{E}_n \left[ (\bar{a}_{l-1})_{b,\pi(n)} (\bar{a}_{l'-1})_{d,\pi'(n)} \right]\right]\\
\cdot \mathbb{E}_{(x, t)} \left[ \mathbb{E}_n \left[ \frac{\partial^2 \mathcal{L}_n}{\partial (h_l)_{a,\pi(n)} \partial (h_{l'})_{c,\pi'(n)}} \right]\right]
\end{multline}
are the same for any $l$, $l'$, $a$, $b$, $c$, $d$, $\pi$, and $\pi'$, then
\begin{equation} \label{eq:bnsimple}
H = \bar{A} * \mathcal{H}' + \frac{1}{\textnormal{max}(N-1, 1)} \left( N \bar{A}' -  \bar{A} \right) * (\mathcal{H}'' - \mathcal{H}') ,
\end{equation}
where 
\begin{align}
(\{\bar{A}'\}_{l,l'})_{b,d} & = \mathbb{E}_{x} [ \mathbb{E}_n [(\bar{a}_{l-1})_{b,n}] \mathbb{E}_n [(\bar{a}_{l'-1})_{d,n}] ] ,\\
(\{\mathcal{H}'\}_{l,l'})_{a,c} & = \mathbb{E}_{(x, t)} \left[ \mathbb{E}_n \left[ \sum_{m} \frac{\partial^2 \mathcal{L}_n}{\partial (h_l)_{a,m} \partial (h_{l'})_{c,m}} \right]\right] ,\\
(\{\mathcal{H}''\}_{l,l'})_{a,c} & = \mathbb{E}_{(x, t)} \left[ \mathbb{E}_n \left[ \sum_{m,m'} \frac{\partial^2 \mathcal{L}_n}{\partial (h_l)_{a,m} \partial (h_{l'})_{c,m'}} \right]\right] .
\end{align}
\end{theorem}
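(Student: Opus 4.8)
The plan is to start from the exact per-example Hessian of a linear layer, Equation~\ref{eq:hessian}, take the expectation $\mathbb{E}_{(x,t)}[\mathbb{E}_n[\,\cdot\,]]$ of both sides, and organize the resulting sum using the five-group decomposition together with the indistinguishability of examples noted just before the theorem. Writing $\mathbb{E}_n[\sum_{m,m'}(\cdots)] = \frac1N\sum_{n,m,m'}(\cdots)$, the index triple $(n,m,m')$ runs over $\{1,\dots,N\}^3$, and I would partition it into the groups (i)--(v). Since every mini-batch is sampled equally likely, a permutation $\sigma$ of the example indices fixes both $\mathbb{E}_x$ and $\mathbb{E}_{(x,t)}$ while sending $\mathcal{L}_n\mapsto\mathcal{L}_{\sigma(n)}$, $(h_l)_{a,m}\mapsto(h_l)_{a,\sigma(m)}$, and $(\bar a_{l-1})_{b,m}\mapsto(\bar a_{l-1})_{b,\sigma(m)}$; hence the expectation of a summand depends on $(n,m,m')$ only through its coincidence pattern, so all triples inside a fixed group contribute the same value. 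Using the group sizes $N,\,N(N-1),\,N(N-1),\,N(N-1),\,N(N-1)(N-2)$, the entry $(\{H\}_{l,l'})_{i,j}$ with $i\leftrightarrow(a,b)$ and $j\leftrightarrow(c,d)$ becomes a fixed linear combination of five representative expectations.

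Next I would dispatch the activation side. Applying the factorization hypothesis of the theorem to the representative summand of each group, the activation part is $\mathbb{E}_x[(\bar a_{l-1})_{b,m}(\bar a_{l'-1})_{d,m'}]$ evaluated at that group's pattern, which by indistinguishability takes only two values: it equals $(\{\bar A\}_{l,l'})_{b,d}$ when $m=m'$ (groups (i) and (ii)), and a single off-diagonal value, say $Q_{b,d}:=\mathbb{E}_x[(\bar a_{l-1})_{b,m}(\bar a_{l'-1})_{d,m'}]$ for $m\neq m'$, in groups (iii)--(v). A one-line second-moment identity pins down $Q$: expanding $N^2(\{\bar A'\}_{l,l'})_{b,d}=\mathbb{E}_x[(\sum_m(\bar a_{l-1})_{b,m})(\sum_{m'}(\bar a_{l'-1})_{d,m'})]$ into its $N$ diagonal and $N(N-1)$ off-diagonal terms gives $N^2(\{\bar A'\}_{l,l'})_{b,d}=N(\{\bar A\}_{l,l'})_{b,d}+N(N-1)Q_{b,d}$, i.e.\ $Q_{b,d}=\big(N(\{\bar A'\}_{l,l'})_{b,d}-(\{\bar A\}_{l,l'})_{b,d}\big)/(N-1)$ for $N\ge2$.

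Then I would reassemble the gradient side. After folding in the group multiplicities and the overall $1/N$, the two $m=m'$ groups contribute a gradient factor that is exactly $\mathbb{E}_{(x,t)}[\mathbb{E}_n[\sum_m\partial^2\mathcal{L}_n/\partial(h_l)_{a,m}\partial(h_{l'})_{c,m}]]=(\{\mathcal{H}'\}_{l,l'})_{a,c}$, while summing all five groups reproduces $\mathbb{E}_{(x,t)}[\mathbb{E}_n[\sum_{m,m'}\partial^2\mathcal{L}_n/\partial(h_l)_{a,m}\partial(h_{l'})_{c,m'}]]=(\{\mathcal{H}''\}_{l,l'})_{a,c}$, so the three $m\neq m'$ groups together contribute the gradient factor $(\{\mathcal{H}''\}_{l,l'})_{a,c}-(\{\mathcal{H}'\}_{l,l'})_{a,c}$ --- which is precisely why $\mathcal{H}'$ and $\mathcal{H}''$ are defined the way they are. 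Pairing each gradient factor with its activation factor then yields the scalar identity that $(\{H\}_{l,l'})_{i,j}$ equals $(\{\bar A\}_{l,l'})_{b,d}\,(\{\mathcal{H}'\}_{l,l'})_{a,c}$ plus $Q_{b,d}$ times $\big((\{\mathcal{H}''\}_{l,l'})_{a,c}-(\{\mathcal{H}'\}_{l,l'})_{a,c}\big)$; substituting the expression for $Q_{b,d}$ and using $\max(N-1,1)=N-1$ for $N\ge2$ turns this into the entrywise form of Equation~\ref{eq:bnsimple}. Finally, since under the column-stacking $\text{vec}$ convention the product of block entries $(\{X\}_{l,l'})_{b,d}\,(\{Y\}_{l,l'})_{a,c}$ is exactly the $(i,j)$-entry of the Khatri--Rao product $X*Y$, I would lift the scalar identity to the block-matrix equality in the statement.

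The hard part will not be any single estimate but the bookkeeping: keeping the five coincidence patterns, their multiplicities, their two activation values, and their recombination into $\mathcal{H}'$ and $\mathcal{H}''$ mutually consistent, and verifying that the permutation-form hypothesis really does deliver the per-group factorizations (each group is realized by suitable $\pi,\pi'$: both the identity for (i); $\pi$ the identity and $\pi'$ a derangement for (iii); $\pi$ a derangement and $\pi'$ the identity for (iv); $\pi(n)=n{+}1$ and $\pi'(n)=n{+}2$ mod $N$ for (v) when $N\ge3$). The only genuine special case is $N=1$, where only group (i) survives, $\mathcal{H}''=\mathcal{H}'$, and the second term vanishes while $\max(N-1,1)$ keeps the coefficient finite; $N=2$ (group (v) empty) is worth a quick check as well, but the same identities go through unchanged.
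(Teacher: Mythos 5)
Your proposal is correct and follows essentially the same route as the paper's proof in Appendix~\ref{app:bnsimple}: the same five-fold coincidence-pattern decomposition of the $(n,m,m')$ triples, the same use of exchangeability to reduce each group to a representative term, the same per-group application of the factorization hypothesis, and the same second-moment identity giving $Q=(N\bar{A}'-\bar{A})/(N-1)$. Your only (welcome) streamlining is that by noting all three $m\neq m'$ groups share the activation factor $Q$, you sum their gradient factors directly as ``all pairs minus diagonal pairs'' $=\mathcal{H}''-\mathcal{H}'$, which avoids the paper's intermediate asymmetric matrix $\mathcal{H}'''$ and its separate $N=2$ verification.
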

\begin{proof}
See Appendix~\ref{app:bnsimple}.
\end{proof}

We will call Equation~\ref{eq:bnsimple} the extended K-FAC, or simply XK-FAC.
Note that $\mathcal{H} = \mathcal{H}' = \mathcal{H}''$ for a BN-free case, and hence, XK-FAC is a good generalization of K-FAC (Equation~\ref{eq:nobnsimple}).
Further, it can be applied with a large mini-batch size in practice because it requires only twice as much memory and computation as the original K-FAC.

If the Hessian is approximated by the Fisher information matrix~\cite{pascanu2013revisiting} to avoid negative eigenvalues in practice, then $\hat{\mathcal{H}}'$ and $\hat{\mathcal{H}}''$ are used instead of $\mathcal{H}'$ and $\mathcal{H}''$, where
\begin{align}
(\{\hat{\mathcal{H}}'\}_{l,l'})_{a,c} & = \mathbb{E}_{(x, y)} \left[ \mathbb{E}_n \left[ \sum_{m} \frac{\partial \mathcal{L}_n}{\partial (h_l)_{a,m}}  \frac{\partial \mathcal{L}_n}{\partial (h_{l'})_{c,m}} \right]\right] ,\\
(\{\hat{\mathcal{H}}''\}_{l,l'})_{a,c} & = \mathbb{E}_{(x, y)} \Bigg[ \mathbb{E}_n \Bigg[ \left( \sum_{m} \frac{\partial \mathcal{L}_n}{\partial (h_l)_{a,m}}  \right) \nonumber \\
&\hspace{60px}\cdot\left( \sum_{m} \frac{\partial \mathcal{L}_n}{\partial (h_{l'})_{c,m}} \right) \Bigg]\Bigg] .
\end{align}
They are easily computed from the gradients, where the expectations are taken over the model's distribution.
In this case, it is guaranteed that XK-FAC is positive semi-definite by the following theorem.
\begin{theorem} \textbf{(Positive semi-definiteness of XK-FAC)}
\begin{equation}
\bar{A} * \hat{\mathcal{H}}' + \frac{1}{\textnormal{max}(N-1, 1)} \left( N \bar{A}' -  \bar{A} \right) * (\hat{\mathcal{H}}'' - \hat{\mathcal{H}}') \succeq 0
\end{equation}
\end{theorem}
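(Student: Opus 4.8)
The plan is to write $M:=\bar A*\hat{\mathcal H}'+\tfrac1{\max(N-1,1)}(N\bar A'-\bar A)*(\hat{\mathcal H}''-\hat{\mathcal H}')$ as an expectation of a positive semi-definite block matrix, extending the standard observation that a Khatri--Rao product of two second-moment matrices (here $\bar A*\hat{\mathcal H}'$) is positive semi-definite. Since $*$ acts blockwise as a Kronecker product, I work with the $(l,l')$ block. Writing $p_{l,m}$ for the $m$-th column of $\bar a_{l-1}$, $g_{l,n,m}$ for the $m$-th column of $\partial\mathcal L_n/\partial h_l$, and $\mathbb E_n[\cdot]=\tfrac1N\sum_n(\cdot)$, the definitions give $\{\bar A\}_{l,l'}=\tfrac1N\mathbb E_x[\sum_m p_{l,m}p_{l',m}^\top]$ and $\{\hat{\mathcal H}'\}_{l,l'}=\mathbb E_{(x,y),n}[\sum_m g_{l,n,m}g_{l',n,m}^\top]$, while the identity $\big(\sum_m g_{l,n,m}\big)\big(\sum_{m'}g_{l',n,m'}\big)^\top-\sum_m g_{l,n,m}g_{l',n,m}^\top=\sum_{m\neq m'}g_{l,n,m}g_{l',n,m'}^\top$ (and its activation analogue) yields $\{N\bar A'-\bar A\}_{l,l'}=\tfrac1N\mathbb E_x[\sum_{m\neq m'}p_{l,m}p_{l',m'}^\top]$ and $\{\hat{\mathcal H}''-\hat{\mathcal H}'\}_{l,l'}=\mathbb E_{(x,y),n}[\sum_{m\neq m'}g_{l,n,m}g_{l',n,m'}^\top]$. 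For $N=1$ the second summand of $M$ vanishes identically ($N\bar A'=\bar A$ and $\hat{\mathcal H}''=\hat{\mathcal H}'$), so $M=\bar A*\hat{\mathcal H}'\succeq0$; hence assume $N\ge2$ from now on.

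Because $\bar A$ and $\bar A'$ only involve the activation sample, $\hat{\mathcal H}'$ and $\hat{\mathcal H}''$ only involve the (independent) gradient sample, and the Kronecker product is bilinear, I would represent each Khatri--Rao product by a single expectation over an activation draw $\xi$ together with an independent draw $(\eta,n)$ for the gradients. Setting $u_{l,m,m'}:=p_{l,m}(\xi)\otimes g_{l,n,m'}(\eta)$ and using $(A\otimes C)(B\otimes D)=(AB)\otimes(CD)$, the $(l,l')$ block of $M$ becomes $\mathbb E_{\xi,\eta,n}$ of $\tfrac1N\sum_{m,m'}u_{l,m,m'}u_{l',m,m'}^\top+\tfrac1{N(N-1)}\sum_{m_1\neq m_2}\sum_{m_1'\neq m_2'}u_{l,m_1,m_1'}u_{l',m_2,m_2'}^\top$. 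For an arbitrary block vector $v=(v_l)_l$, letting $W$ be the real $N\times N$ matrix with entries $W_{m,m'}:=\sum_l v_l^\top u_{l,m,m'}$ (a function of $\xi,\eta,n$) collapses the quadratic form to $v^\top Mv=\mathbb E_{\xi,\eta,n}[Q(W)]$, where $Q(W):=\tfrac1N\sum_{m,m'}W_{m,m'}^2+\tfrac1{N(N-1)}\sum_{m_1\neq m_2}\sum_{m_1'\neq m_2'}W_{m_1,m_1'}W_{m_2,m_2'}$. It then suffices to show $Q(W)\ge0$ for every real $N\times N$ matrix $W$.

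For that, expanding the double sum by inclusion--exclusion on the constraints $m_1\neq m_2$, $m_1'\neq m_2'$ shows it equals $S^2-\sum_m r_m^2-\sum_{m'}c_{m'}^2+\|W\|_F^2$, where $r_m$, $c_{m'}$, $S$ are the row sums, column sums, and grand sum of $W$, so $Q(W)=\tfrac1{N(N-1)}\big(N\|W\|_F^2+S^2-\sum_m r_m^2-\sum_{m'}c_{m'}^2\big)$. Now split $W$ into its four mutually Frobenius-orthogonal two-way ANOVA components --- grand mean, row effects, column effects, and double-centered residual $W^{\circ}$ --- and substitute: the row-effect and column-effect contributions cancel between $N\|W\|_F^2$ and $\sum_m r_m^2+\sum_{m'}c_{m'}^2$, leaving $Q(W)=S^2/N^2+\|W^{\circ}\|_F^2/(N-1)\ge0$. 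Hence $v^\top Mv\ge0$ for all $v$, i.e. $M\succeq0$.

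The main obstacle is this last inequality. The cross term $\tfrac1{N(N-1)}\sum_{m_1\neq m_2}\sum_{m_1'\neq m_2'}W_{m_1,m_1'}W_{m_2,m_2'}$ is genuinely indefinite --- for $N=2$ and $W=\mathrm{diag}(1,-1)$ it equals $-2$, against $\tfrac1N\|W\|_F^2=1$ --- so $Q(W)\ge0$ is not automatic; it hinges on the precise weight $1/(N-1)$ in XK-FAC, which is exactly what makes the indefinite part merge with $\tfrac1N\|W\|_F^2$ into the manifestly nonnegative $S^2/N^2+\|W^{\circ}\|_F^2/(N-1)$ (and the bound is tight: $Q=0$ at that $W$). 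A secondary point needing care is the bookkeeping that rewrites $N\bar A'-\bar A$ and $\hat{\mathcal H}''-\hat{\mathcal H}'$ as the off-diagonal-pair sums and verifies that one common independent coupling $\xi\perp(\eta,n)$ may be used consistently across both summands of $M$.
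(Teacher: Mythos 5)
Your proof is correct, but it reaches the conclusion by a different route than the paper. The paper's argument is purely matrix-algebraic: it shows $\bar{A}\succeq 0$, $\bar{A}'\succeq 0$, $\bar{A}-\bar{A}'\succeq 0$, $\hat{\mathcal{H}}'\succeq 0$, $\hat{\mathcal{H}}''\succeq 0$, and $N\hat{\mathcal{H}}'-\hat{\mathcal{H}}''\succeq 0$ (each being an expectation of outer products or of covariance matrices), regroups the target as $\tfrac{1}{N-1}(\bar{A}-\bar{A}')*(N\hat{\mathcal{H}}'-\hat{\mathcal{H}}'')+\bar{A}'*\hat{\mathcal{H}}''$, and then invokes the cited fact that a Khatri--Rao product of symmetrically partitioned PSD matrices is PSD. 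You instead push the quadratic form all the way down to a scalar inequality $Q(W)\ge 0$ for an arbitrary $N\times N$ matrix $W$, proved by the two-way ANOVA decomposition; this is more elementary and self-contained (you never need the Khatri--Rao PSD lemma, only the independent coupling of the activation and gradient expectations, which is legitimate since each Khatri--Rao factor is a separate expectation). It is worth noting that the two arguments are essentially dual: contracting the paper's two terms against $v$ gives exactly your $\|W^{\circ}\|_F^2/(N-1)$ and $S^2/N^2$, so your identity $Q(W)=S^2/N^2+\|W^{\circ}\|_F^2/(N-1)$ is the scalar shadow of the paper's matrix regrouping --- the paper's version buys brevity at the cost of an external lemma, yours buys transparency (it makes visible why the weight $1/(N-1)$ is exactly the tight one) at the cost of length. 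One small slip in your illustrative aside: for $N=2$ and $W=\mathrm{diag}(1,-1)$ the unnormalized double sum $\sum_{m_1\neq m_2}\sum_{m_1'\neq m_2'}W_{m_1,m_1'}W_{m_2,m_2'}$ equals $-2$, so the cross term \emph{with} its $1/(N(N-1))=1/2$ prefactor equals $-1$, not $-2$; with $-1$ the cancellation against $\|W\|_F^2/N=1$ gives $Q=0$ as you claim, whereas $-2$ would contradict the theorem. This does not affect the main argument.
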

\begin{proof}
See Appendix~\ref{app:psd}.
\end{proof}

\section{Method for continual learning} \label{sec:method}

Though XK-FAC is one of the key elements, XK-FAC alone cannot achieve continual learning with BN.
Here we propose additional solutions to using BN properly with continual learning settings and discuss issues with hyperparameter selection.

\subsection{Merged weight and batch renormalization} \label{sec:mer}

A weight matrix can be expressed by the product of multiple weight matrices in non-unique ways and consecutive weight matrices can be merged into one matrix, so a single network can be parameterized in many different ways.
If there are many equivalent parameterization methods, then it would be natural to take the simplest form for efficiency.
In particular, it will be computationally efficient and reduce the size of the Hessian if we can treat multiple layers as a single layer.

Let us consider a BN layer with its learnable affine parameters $\gamma$ and $\beta$.
First, for any input $z$, by observing that
\begin{equation}
\frac{z_{jn} - \bar{\mu}_i}{\sqrt{\bar{\sigma}_i^2 + \epsilon}} \gamma_i + \beta_i = \frac{\gamma_i}{\sqrt{\bar{\sigma}_i^2 + \epsilon}} z_{jn} + \left( \beta_i - \frac{\gamma_i \bar{\mu}_i}{\sqrt{\bar{\sigma}_i^2 + \epsilon}} \right) ,
\end{equation} 
we propose merging the normalization part and the affine transformation part of a BN layer into one equivalent affine transformation layer with its data-dependent learnable parameters $\tilde{\gamma}$ and $\tilde{\beta}$ defined as
\begin{equation}
\tilde{\gamma}_i = \frac{\gamma_i}{\sqrt{\bar{\sigma}_i^2 + \epsilon}},
\qquad
\tilde{\beta}_i = \beta_i - \frac{\gamma_i \bar{\mu}_i}{\sqrt{\bar{\sigma}_i^2 + \epsilon} } ,
\end{equation}
where $\bar{\mu}_i$ and $\bar{\sigma}_i^2$ are mini-batch mean and variance, respectively.

If a preceding linear layer exists with its weight $w$ (excluding bias), then this linear layer and the BN layer are merged into one equivalent fully connected layer as well.
Thus, we define a merged fully connected layer whose data-dependent merged weight $\tilde{w}$ and merged bias $\tilde{b}$ are
\begin{equation} \label{eq:mergebn}
\tilde{w}_{ij} = \frac{\gamma_i w_{ij}}{\sqrt{\bar{\sigma}_i^2 + \epsilon}},
\qquad
\tilde{b}_i = \beta_i - \frac{\gamma_i \bar{\mu}_i}{\sqrt{\bar{\sigma}_i^2 + \epsilon} } .
\end{equation}
Then, $\tilde{w}$ and $\tilde{b}$ are concatenated to form a merged weight matrix $\tilde{W}$.
By this reparameterization, we can consider the penalty loss $\frac{1}{2} (\tilde{\theta} - \tilde{\theta}^*)^\top H (\tilde{\theta} - \tilde{\theta}^*)$ on the merged parameter $\tilde{\theta} = \text{vec}(\tilde{W})$, where the Hessian $H$ is also taken with respect to $\tilde{\theta}$.
As $\tilde{W}$ can represent any real matrix and is a continuous function of $w$, $\gamma$, and $\beta$, $\tilde{W}$ can move in any direction in the parameter space by moving $w$, $\gamma$, and $\beta$ appropriately.
Thus, we do not lose any representation power of the original network from the reparameterization.

In addition to efficiency, there are other advantages of reparameterization.
If a single weight matrix is factorized to multiple matrices and each factor is penalized by Equation~\ref{eq:sourceloss}, then the factors are penalized to be kept close to their own original values.
However, their original values individually are less important because of the non-uniqueness of factorization, and it is enough to keep the value of their multiplication only.
Without reparameterization, each factor is unnecessarily over-penalized.

Further, the proposed reparameterization method plays one of the most important roles in continual learning with BN's statistical parameters $\bar{\mu}$ and $\bar{\sigma}$ which are determined by data and preceding weight matrices.
If only free parameters are penalized and statistical parameters are not, then the penalty loss is totally not capable of preserving the performance of the original network, even at the global minimum of the penalty loss, due to drifts in statistics.
So one may attempt to also directly penalize $\bar{\mu}$ and $\bar{\sigma}$, but this approach will fail because all preceding weights are unnecessarily penalized to keep the original statistics, which extremely conflicts with the penalty loss for the preceding weights.
Statistical parameters and weight values cannot simultaneously keep their original values unless the statistics of source and target task data are the same.
In contrast, if $\bar{\mu}$ and $\bar{\sigma}$ are merged with free parameters such as $w$, $\gamma$, and $\beta$, then free parameters can compensate for changes in $\bar{\mu}$ and $\bar{\sigma}$ caused by domain changes, while keeping the merged parameter similar without affecting preceding weights.

The use of mini-batch statistics causes stochastic fluctuations in $\tilde{W}$ because of the dependence of $\bar{\mu}$ and $\bar{\sigma}$ on mini-batch sampling, which make the penalty loss keep oscillating even at its local minimum.
Hence, the penalty loss will converge more stably if such mini-batch dependencies can be eliminated; there is already a good solution for this scenario: batch renormalization (BRN)~\cite{ioffe2017batch}.
BRN uses population statistics instead of mini-batch statistics also in the training mode, while retaining the benefits of the BN by selective gradient propagation. Therefore, by adopting BRN in our framework, the merged parameters become
\begin{equation} \label{eq:mergebrn}
\tilde{w}_{ij} = \frac{\gamma_i w_{ij}}{\sqrt{\bar{\sigma}_i^2 + \epsilon}} r_i,
\quad
\tilde{b}_i = \beta_i + \gamma_i d_i - \frac{\gamma_i \bar{\mu}_i}{\sqrt{\bar{\sigma}_i^2 + \epsilon} } r_i ,
\end{equation}
where
\begin{equation}
r_i = \sqrt{\frac{\bar{\sigma}_i^2 + \epsilon}{\sigma_i^2 + \epsilon}} ,
\qquad
d_i =  \frac{\bar{\mu}_i - \mu_i}{\sqrt{\sigma_i^2 + \epsilon}}
\end{equation}
are treated as constants so that the gradient is not propagated through $r_i$ and $d_i$, and where $\mu_i$ and $\sigma_i^2$ are population statistics.

\subsection{Forward/backward passes of the penalty loss}

The gradient of the penalty loss can be efficiently calculated by matrix multiplications when the Hessian is approximated by a Kronecker-factored form~\cite{martens2018kroneckerfactored, ritter2018online}.
If $\mathcal{L}_s = \frac{1}{2} (\tilde{\theta} - \tilde{\theta}^*)^\top H (\tilde{\theta} - \tilde{\theta}^*)$ is the penalty loss, then for XK-FAC,
\begin{multline} \label{eq:bp}
\frac{\partial \mathcal{L}_s}{\partial \tilde{W}_l} = \sum_{l'} \Bigg( \{\mathcal{H}'\}_{l, l'} (\tilde{W}_{l'} - \tilde{W}_{l'}^*) \{\bar{A}\}_{l, l'}^\top\\
+ \{\mathcal{H}'' - \mathcal{H}'\}_{l, l'} (\tilde{W}_{l'} - \tilde{W}_{l'}^*) \left\{\frac{N\bar{A}' - \bar{A}}{\text{max}(N-1,1)}\right\}_{l, l'}^\top \Bigg) ,
\end{multline}
since $(A \otimes B) \text{vec}(C) = \text{vec}(B C A^\top)$.
Then, the forward pass can be performed by the following identity.
\begin{equation}
\mathcal{L}_s = \frac{1}{2} \sum_{l} \text{vec}(\tilde{W}_{l} - \tilde{W}_{l}^*) ^\top \text{vec}\left(\frac{\partial \mathcal{L}_s}{\partial \tilde{W}_l}\right)
\end{equation}
Finally, $\frac{\partial \mathcal{L}_s}{\partial \tilde{w}}$ and $\frac{\partial \mathcal{L}_s}{\partial \tilde{b}}$ are obtained by detaching the homogeneous dimension from $\frac{\partial \mathcal{L}_s}{\partial \tilde{W}}$, and they are propagated to $\frac{\partial \mathcal{L}_s}{\partial w}$, $\frac{\partial \mathcal{L}_s}{\partial \gamma}$, $\frac{\partial \mathcal{L}_s}{\partial \beta}$, and $\frac{\partial \mathcal{L}_s}{\partial a}$ by the usual chain rule.

\subsection{Damping}

The second-order approximation in Equation~\ref{eq:sourceloss} is trustworthy only in a local neighborhood around the initial point.
The quadratic penalty loss can prevent the parameters from moving too far from the initial point when positive semi-definite approximations of Hessian such as the Fisher information matrix~\cite{pascanu2013revisiting} or the generalized Gauss--Newton matrix~\cite{schraudolph2002fast} are used; however, the problem can be still ill-posed.
That is, if some eigenvalue of a Hessian is very close to zero, the parameter can move along the direction of the corresponding eigenvector with almost no restriction and eventually escape the trust region.

Here, we rethink the weight decay method, which is commonly used to regularize a network.
The weight decay loss drags parameters down to the origin so that any single parameter cannot easily dominate the others and hopefully a network generalizes well.
However, in our continual learning framework, the weight decay loss interferes with the penalty loss that attempts to maintain parameters near the initial point.
Hence, we change the center of the weight decay loss from the origin to the initial point $\theta^*$.
This modified weight decay is equivalent to adding a damping matrix $\lambda I$ to $H$, and it achieves the effect of setting a lower limit $\lambda$ on eigenvalues.

\subsection{Preprocessing and postprocessing}

When the first target sample passes through a network to begin continual learning, data-dependent parameters are changed immediately from the source task statistics to target task statistics.
If the source and target tasks are not similar, then $\tilde{\theta}$ can be very different from $\tilde{\theta}^*$ at the beginning of training, which leads to a poor Hessian approximation of the source task loss function.
To compensate for this drastic change in parameters, we reinitialize $\gamma$ and $\beta$ one time at the beginning of the training for each target task using the following preprocessing method.
Let $\mu^*$ and $\sigma^*$ be the source task population statistics stored in the source network, and $\mu$ and $\sigma$ be the target task ones, which can be obtained by passing target samples through the network in the evaluation mode.
Then, $\gamma$ and $\beta$ are reinitialized to
\begin{equation}
\beta_i \leftarrow \beta_i^* + \frac{\gamma_i^*}{\sqrt{\sigma_i^{*2} + \epsilon}} (\mu_i - \mu_i^*),
\quad
\gamma_i \leftarrow \sqrt{\frac{\sigma_i^2 + \epsilon}{\sigma_i^{*2} + \epsilon}} \gamma_i^* ,
\end{equation}
so that $\tilde{\theta}$ is initialized to $\tilde{\theta}^*$ at the start of a new task, because
\begin{equation}
\frac{z_{jn} - \mu_i^*}{\sqrt{\sigma_i^{*2} + \epsilon}} \gamma_i^* + \beta_i^* = \frac{z_{jn} - \mu_i}{\sqrt{\sigma_i^{2} + \epsilon}} \gamma_i + \beta_i
\end{equation}
for any $z$.

After finishing continual learning with the current source and target tasks, the union of those two tasks becomes the new source task for future continual learning.
Thus, the source task Hessian $H$ must be updated to contain current target task information by postprocessing.
If we let $H_t$ be the Hessian of the current target task loss function, then, from Equation~\ref{eq:cl}, it is obvious that
\begin{equation}
H \leftarrow \lambda_s H + \lambda_t H_t .
\end{equation}

\subsection{Hyperparameter selection} \label{sec:hyp}

A continual learning method has importance hyperparameters or balancing parameters that control the importance of each task or trade-off between tasks.
The choice of such hyperparameters is one of the most important in continual learning because it directly affects the final performance.
However, as \cite{pfulb2018a} pointed out, many approaches do not explain why hyperparameters were selected to have such specific values and how to select them in a real-world case, or they presciently use the best hyperparameters selected after all tasks have been processed, which can implicitly violate causality.

In our method, finding importance hyperparameters $\lambda_s$ and $\lambda_t$ using the source and target data is the same as optimizing the magnitude of the penalty loss, and thus, they should not be picked by validation over whole tasks.
From Equations~\ref{eq:mtl} and \ref{eq:cl}, it is obvious that $\lambda_s$ and $\lambda_t$ directly represent the magnitude of the source and target loss, respectively.
Therefore, if each task is equally important, then the importance hyperparameters are set to
\begin{equation} \label{eq:imp_hyp}
\lambda_s = \frac{T}{T+1}, \qquad\quad \lambda_t = \frac{1}{T+1}
\end{equation}
without any validation, where $T$ is the number of tasks the model has learned so far.
One can also easily set different importance for each task if needed.

If the best learning rate and damping hyperparameter for each task are found by validation over whole tasks, the source task data is then being used to set a trust region of the source task loss function because a small learning rate implicitly restricts the range that parameters can move and the damping method explicitly restricts it.
Thus, for the learning rate and damping hyperparameter $\lambda$, we also avoid using the source task data to find them.
However, there is not a natural choice for such hyperparameters, unlike the importance hyperparameters.

One advantage of the penalty method is that we can always access an approximation of the source task loss function, $\mathcal{L}_s = \frac{1}{2} (\theta - \theta^*)^\top H (\theta - \theta^*)$, and we propose a simple heuristic approach to make use of it.
As we still do not know about source task accuracy, our basic idea is to adaptively scale the source task penalty loss $\lambda_s \mathcal{L}_s$ and target task loss $\lambda_t \mathcal{L}_t$ during training so that they converge to similar values, and to expect that the source and target accuracy drops also would be similar.
However, because $\mathcal{L}_s$ does not contain the constant term in Equation~\ref{eq:sourceloss}, a constant $C_t$ has to be subtracted from $\mathcal{L}_t$ for proper comparison, where $C_t$ is a local minimum value for the target loss function that can be obtained from the fine-tuning with the target task.

To keep $\lambda_s \mathcal{L}_s$ and $\lambda_t (\mathcal{L}_t - C_t)$ similar during training, we introduce adaptive scaling hyperparameters $\alpha_s$ and $\alpha_t$, and minimize $\alpha_s^{-1} \lambda_s \mathcal{L}_s + \alpha_t^{-1} \lambda_t \mathcal{L}_t$ instead of $\lambda_s \mathcal{L}_s + \lambda_t \mathcal{L}_t$.
To be brief, $\alpha_s$ and $\alpha_t$ are initialized to 1, and one of them adaptively increases by 1 until $\lambda_s \mathcal{L}_s$ and $\lambda_t (\mathcal{L}_t - C_t)$ have similar values, and they are re-initialized to 1 to repeat the procedure.
Specifically, if the average value of $\lambda_s \mathcal{L}_s$ within some interval is greater than that of $\lambda_t (\mathcal{L}_t - C_t)$, then $\alpha_t$ increases by 1 to relatively increase the scale of $\lambda_s \mathcal{L}_s$.
$\alpha_t$ continues to increase in this manner at each interval until the average value of $\lambda_s \mathcal{L}_s$ within the interval is eventually smaller than that of $\lambda_t (\mathcal{L}_t - C_t)$, and then, $\alpha_t$ is re-initialized to 1.
In the opposite case, $\alpha_s$ increases by 1 instead of $\alpha_t$, and the procedure is similarly applied in the opposite way.

After training with various settings of learning rates and damping parameters, the model with the best target validation accuracy is determined as the final model.
As the source loss and target loss are minimized while having similar values during training, we hope that the model with the best target validation accuracy will also perform well with the source task.
In this way, we can find hyperparameters by leveraging the penalty loss value, without any access to source data.

\begin{figure*}[t]
\centering
\begin{subfigure}{0.49\linewidth}
\includegraphics[width=\linewidth]{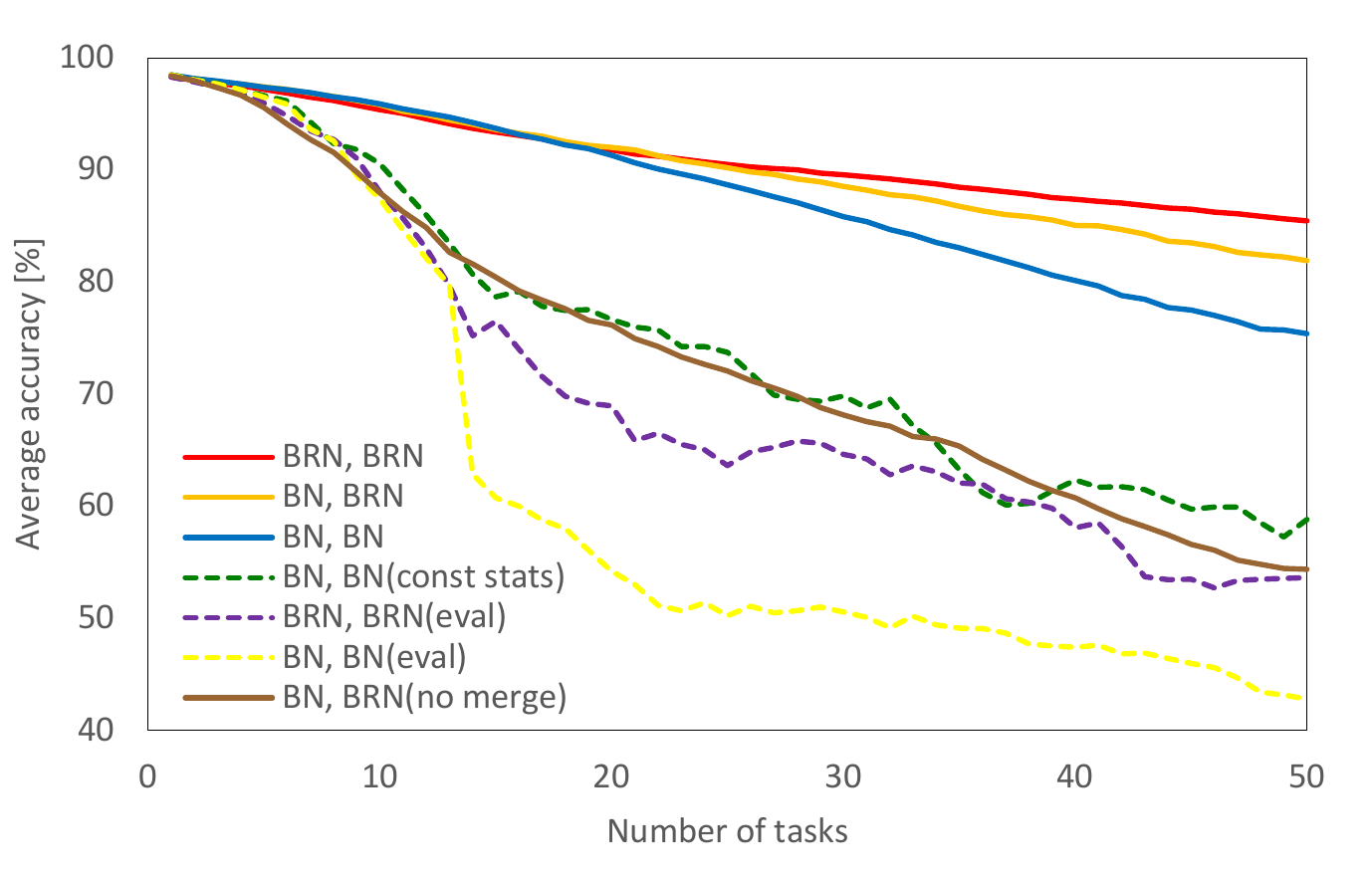}
\caption{}
\label{fig:mnist_a}
\end{subfigure}
\begin{subfigure}{0.49\linewidth}
\includegraphics[width=\linewidth]{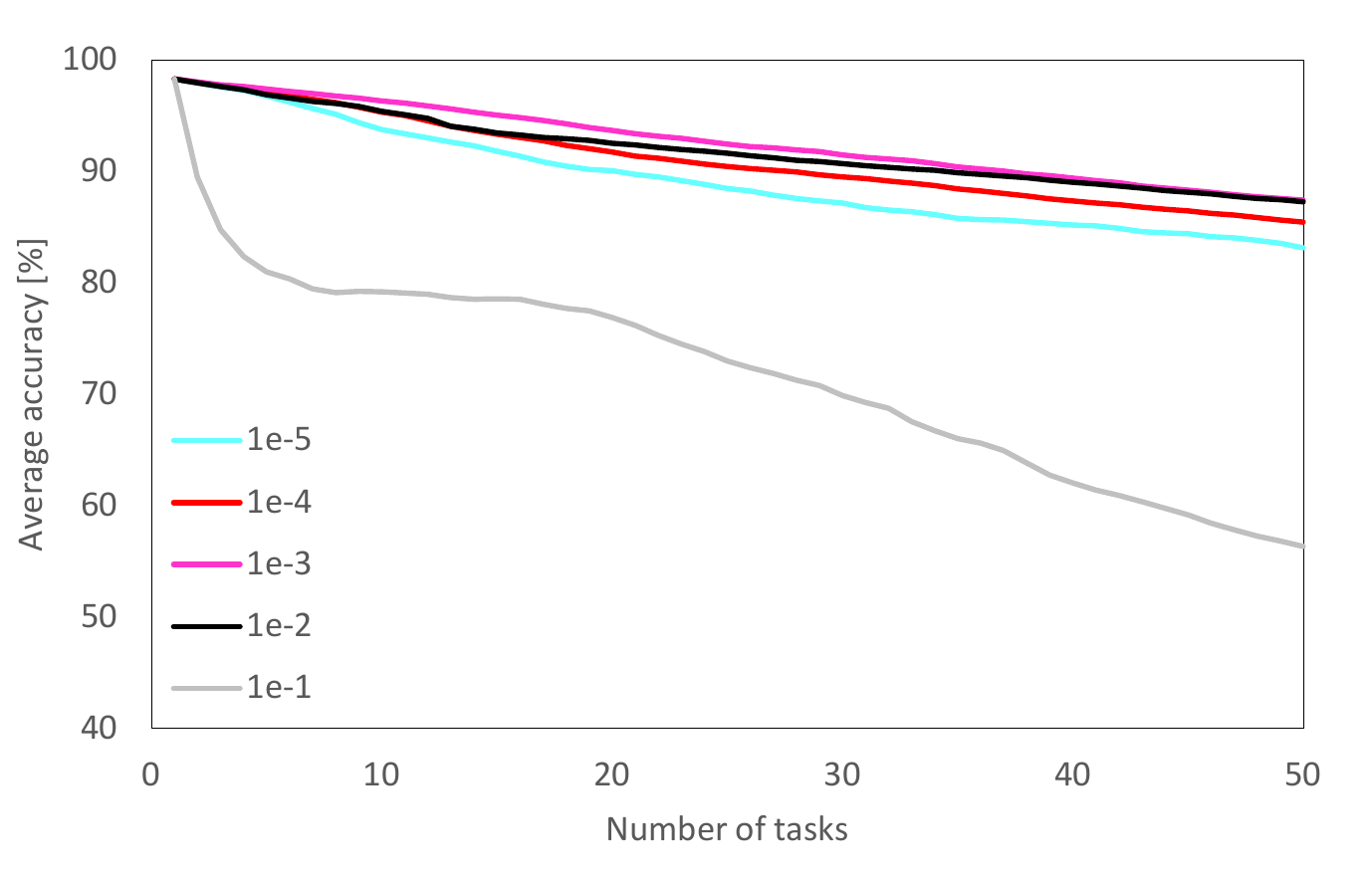}
\caption{}
\label{fig:mnist_b}
\end{subfigure}
\vspace{-5px}
\caption{
Average validation accuracy on permuted MNIST sequential learning.
(a) The legend indicates whether BN layers or BRN layers are used for the network architecture and how their weights are interpreted when constructing the merged weights.
For instance, for the orange line (BN, BRN), BN layers are used for the network, but their weights are interpreted as those of BRN layers when constructing the merged weights.
XK-FAC is used for solid lines, and K-FAC is used for dashed lines.
(b) The legend indicates the value of the damping hyperparameter $\lambda$.
Each experiment was repeated four times.
}
\end{figure*}

\section{Experiment}

\subsection{Permuted MNIST}

\textbf{Task and network architecture.}
As a first experiment, the permuted MNIST~\cite{lecun1998gradient} task is performed as in \cite{goodfellow2013empirical, kirkpatrick2017overcoming, lee2017overcoming, ritter2018online}.
Each task is generated by a fixed random permutation of the original MNIST images.
To investigate the effect of BN, we use a multilayer perceptron (784-128-128-10) with a BN or BRN layer inserted before every ReLU nonlinearity.
BN is actually not necessary here, but even for such shallow network there is a significant performance gap between different BN settings in continual learning, and hence we can demonstrate the effectiveness of the proposed method with this simple experiment.

\textbf{Hyperparameters.}
We emphasize that any source task data is not used explicitly or implicitly for hyperparameter selection.
For all tasks, the same learning rate schedule, optimizer, and damping parameter are used.
The learning rate starts at 0.1, and it is divided by 10 for every 5 epochs, a SGD optimizer with momentum 0.9 and mini-batch size 128 optimizes networks until 15 epochs, and the damping parameter $\lambda$ is set to 0.0001, for all tasks.
Further, we use early stopping based on the validation set of the current target task only, and not the whole tasks.
The importance hyperparameters are set according to Equation~\ref{eq:imp_hyp}.

\textbf{BN and BRN.}
If the merged weights of the BN or BRN layers are constructed in a standard BN-way (Equation~\ref{eq:mergebn}) or BRN-way (Equation~\ref{eq:mergebrn}), XK-FAC is used due to the mini-batch dependence of $\bar{\mu}$ and $\bar{\sigma}$.
In short sequential learning, up to about 20 tasks, both BN (the blue line in Figure~\ref{fig:mnist_a}) and BRN (the red line in Figure~\ref{fig:mnist_a}) networks work similarly.
However, as the sequence becomes extremely long, BRN networks prevent more forgetting than BN networks.
If a model has learned a large number of tasks, it is much more important to maintain source task accuracy than to solve a target task.
Thus, BRN would be preferred for a very long sequence because the penalty loss is more stabilized.

Another interesting finding is that even if BN layers are used in a network architecture, it is better to interpret their weights as those of BRN layers when constructing the merged weights (the orange line in Figure~\ref{fig:mnist_a}).
For mini-batch sizes large enough to assume that population statistics and mini-batch statistics are very similar, a BN and the corresponding BRN layer are almost identical, and thus, the Hessian of BRN layers can still provide a reliable penalty loss to BN layers in a more stabilized way.

Further, we tried other ways of constructing merged weights that circumvent the inter-example relations.
First, mini-batch statistics $\bar{\mu}$ and $\bar{\sigma}$ can be treated as if they were constants (`const stats' in Figure~\ref{fig:mnist_a}).
Or, the population statistics $\mu$ and $\sigma$, which are used in the evaluation mode and considered as constants, can be used to obtain the merged weight instead of the mini-batch statistics (`eval' in Figure~\ref{fig:mnist_a}).
However, the performance drops rapidly if inter-example relations are not considered.

\begin{table*}[t]
  \centering
  \begin{tabular}{|l||c|c|c||c|c|c|c|}
    \hline
    Method & active BN & valid curvature & adaptive $\alpha_s$, $\alpha_t$ & ImageNet & Birds & Cars & Flowers \\
    \hline\hline
    Fixed BN &  & \cmark  &  & 29.21 & 67.05 & 74.24 & 74.20 \\
    K-FAC & \cmark & &  & 50.37 & 70.49 & \textbf{85.98} & 88.45 \\
    XK-FAC & \cmark  & \cmark &  & \textbf{61.15} & \textbf{78.97} & 85.91  & \textbf{91.84} \\
    \hline
    Fixed BN &  & \cmark & \cmark & 49.57 & 76.24 & 84.94 & 81.30 \\
    K-FAC & \cmark  &  & \cmark & 65.31 & 80.09 & 87.35  & 92.14 \\
    XK-FAC & \cmark & \cmark & \cmark & \textbf{66.57} & \textbf{80.30} & \textbf{88.07}  & \textbf{92.56} \\
    \hline
  \end{tabular}
\vspace{-5px}
\caption{
Top-1 validation accuracy on ImageNet to fine-grained classification tasks.
The results were averaged over 6 possible orderings of the three fine-grained datasets.
\vspace{-10px}
}
\label{tab:imagenet}
\end{table*}

\textbf{Effect of weight merging.}
In Section~\ref{sec:mer}, we discussed why it is preferable to use merged weights, especially when there exist statistical parameters such as $\bar{\mu}$ and $\bar{\sigma}$ of BN.
If we individually penalize each parameter, $w$, $\bar{\mu}$, $\bar{\sigma}$, $\gamma$, and $\beta$ instead of the merged weight $\tilde{W}$, each penalty loss will conflict with each other to keep statistical parameters, resulting in significant performance degradation (the brown line in Figure~\ref{fig:mnist_a}).

\textbf{Damping.}
The effect of the damping hyperparameter $\lambda$ is shown in Figure~\ref{fig:mnist_b}.
For moderately small $\lambda$, the damping method can slightly improve the performance.
If it is too large, the Hessian is severely corrupted and cannot give a good approximation of the source task loss function.
We simply set $\lambda$ to a typical value of 0.0001 in the experiment; however, further investigation is needed for determining the value of $\lambda$ or better damping methods in future work.

\subsection{ImageNet to fine-grained classification tasks}

\textbf{Task and network architecture.}
For an experiment with deep networks, we perform sequential learning from the ImageNet classification task to multiple fine-grained classification tasks with ResNet-50~\cite{he2016deep}, as in \cite{mallya2018piggyback, mallya2018packnet}.
Starting with a network pre-trained on ImageNet~\cite{russakovsky2015imagenet}, three fine-grained datasets, CUBS~\cite{WahCUB_200_2011}, Stanford Cars~\cite{krause20133d}, and Flowers~\cite{nilsback2008automated}, are learned sequentially with 6 different orders.
All datasets are preprocessed and augmented in the same way as \cite{mallya2018packnet}.

\textbf{Hyperparameters.}
We also do not use any source task data for hyperparameter selection here.
For all experiments, SGD with momentum 0.9 and mini-batch size 32 optimizes networks until 100 epochs, and $\lambda_s$ and $\lambda_t$ are set according to Equation~\ref{eq:imp_hyp}.
The learning rate gradually decreases to 1e-4, and $\alpha_s$ and $\alpha_t$ are updated every 10 iterations.
We tried initial learning rates of \{1e-1, 5e-2, 2e-2, 1e-2\} and damping hyperparameters of \{1e-4, 5e-5, 2e-5, 1e-5\} for each method, and the final model is determined according to the method described in Section~\ref{sec:hyp}.
For the proposed method, BN layers in a ResNet are treated as BRN layers when constructing the merged weights.
As in \cite{ioffe2017batch}, $r_\text{max}$ and $d_\text{max}$ in BRN are initialized to 1 and 0, and they are gradually relaxed to 3 and 5, respectively.

\textbf{Convolutional network.}
To apply the proposed method to convolutional networks, we combine our XK-FAC and KFC~\cite{grosse2016kronecker}.
As KFC only handles block diagonal parts of the Hessian, the curvature of the penalty loss is approximated by a block diagonal matrix, by considering only the case $l = l'$, as in \cite{ritter2018online}.
Details can be found in Appendix~\ref{app:kfc}.

\textbf{Fixed BN baseline.}
One of the natural baseline methods is a `fixed BN' method.
For a fixed BN method, the parameters of BN, $\gamma$, $\beta$, $\mu$, and $\sigma$, are fixed to the parameters of the ImageNet pre-trained model, and they are not considered as learnable parameters during sequential learning.
It is not very restrictive to fix BN parameters in terms of representation power of a network, because $w$ is still a free parameter and a network does not lose any degree of freedom.
In this case, K-FAC is a valid curvature approximation since there is no inter-example relations.
However, as seen in Table~\ref{tab:imagenet}, fixed BN baselines do not perform well in both source and target tasks.
The curvature of the original source task loss function with BN layers will differ from the Hessian of the corresponding fixed-BN model.
On the other hand, for target tasks, the network cannot take advantages of the BN layers, such as the ease of optimization of such a deep network or better generalization.
We found that networks diverge for learning rates of 1e-1 or 5e-2 due to the absence of BN layers, so we tried learning rates of \{2e-2, 1e-2, 5e-3, 2e-3\} only for the fixed BN method.

\textbf{K-FAC baseline.}
Another baseline is a `K-FAC' method, which does not fix BN but uses an invalid curvature approximation, K-FAC, rather than a valid one, XK-FAC.
For the K-FAC baselines in Table~\ref{tab:imagenet}, the experimental settings and the initial network are exactly the same with that of the XK-FAC methods except the Hessian approximation, so all of the performance degradation comes from ignoring inter-example relations when approximating the curvature.
It is remarkable that simply using a better Hessian approximation can improve performance.
Quantitatively, for the initial network, the mean absolute values of all block diagonal entries in K-FAC and XK-FAC are respectively 5.65e-7 and 4.96e-7, and that of the difference between K-FAC and XK-FAC is 1.19e-7, which is not negligible.

\textbf{Effect of $\alpha_s$ and $\alpha_t$.}
In Section~\ref{sec:hyp}, we proposed a simple approach to finding hyperparameters by leveraging the penalty loss value without any source data, and the special hyperparameters $\alpha_s$ and $\alpha_t$ were introduced to control the balance between a source task penalty loss and a target task loss.
To observe the effect of $\alpha_s$ and $\alpha_t$, we also experimented with $\alpha_s$ and $\alpha_t$ fixed to 1.
As seen in Table~\ref{tab:imagenet}, even though the proposed method of controlling $\alpha_s$ and $\alpha_t$ is very simple, it can significantly improve the performance of all methods, including the baselines.
See Appendix~\ref{app:res} for more results.
It would be worth exploring more sophisticated methods as a future work.

\section{Conclusion}

In this paper, we extended K-FAC to consider inter-example relations caused by BN layers.
Further, detailed methods of applying XK-FAC for continual learning, such as weight merging, use of BRN layers, and hyperparameter selection, were proposed, and we demonstrated that each proposed method works effectively in continual learning.

\section*{Acknowledgment}

This research was supported by the Engineering Research Center Program through the National Research Foundation of Korea (NRF) funded by the Korean Government MSIT (NRF-2018R1A5A1059921)
and was supported by
Institute of Information \& communications Technology Planning \& Evaluation (IITP) grant funded by the Korea government(MSIT) (No.2019-0-00075, Artificial Intelligence Graduate School Program(KAIST)).

{\small
\bibliographystyle{ieee_fullname}
\bibliography{egbib}
}

\clearpage

\onecolumn

\renewcommand\thesection{\Alph{section}}
\setcounter{section}{0}
\section{Appendix}

\subsection{Related Work}
\textbf{Continual learning.}
Many penalty-based approaches have been proposed to overcome catastrophic forgetting.
\cite{kirkpatrick2017overcoming} protects the source task performance by a quadratic penalty loss where the importance of each weight is measured by the diagonal of Fisher.
\cite{liu2018rotate} proposes a network reparameterization technique that approximately diagonalizes the Fisher Information Matrix of the network parameters.
In \cite{ritter2018online}, the block diagonal K-FAC is used for a quadratic penalty loss to take interaction between parameters in each single layer into account.
\cite{aljundi2018memory} proposes to measure the importance of a parameter by the magnitude of the gradient.
\cite{zenke2017continual} also defines a quadratic penalty loss designed with the change in loss over an entire trajectory of parameters.
\cite{park2019continual} approximates a true loss function using an asymmetric quadratic function with one of its sides overestimated.

\subsection{The Hessian of a linear layer} \label{app:hessian}

As $(h_l)_{i,m} = \sum_{k} (W_l)_{i,k} (\bar{a}_{l-1})_{k,m}$,
\begin{equation}
\frac{\partial \mathcal{L}_n}{\partial (W_l)_{a,b}} = \sum_{m,i} \frac{\partial \mathcal{L}_n}{\partial (h_l)_{i,m}} \frac{\partial (h_l)_{i,m}}{\partial (W_l)_{a,b}} = \sum_{m} \frac{\partial \mathcal{L}_n}{\partial (h_l)_{a,m}} (\bar{a}_{l-1})_{b,m} .
\end{equation}
Then,
\begin{equation}
\frac{\partial^2 \mathcal{L}_n}{\partial (W_{l'})_{c,d} \partial (W_{l})_{a,b}} = \sum_{m} \left( \frac{\partial}{\partial (W_{l'})_{c,d}}  \left( \frac{\partial \mathcal{L}_n}{\partial (h_l)_{a,m}} \right) (\bar{a}_{l-1})_{b,m} + \frac{\partial \mathcal{L}_n}{\partial (h_l)_{a,m}} \frac{\partial (\bar{a}_{l-1})_{b,m}}{\partial (W_{l'})_{c,d}} \right) .
\end{equation}
Using the chain rule,
\begin{equation} \begin{split}
\frac{\partial}{\partial (W_{l'})_{c,d}}  \left( \frac{\partial \mathcal{L}_n}{\partial (h_l)_{a,m}} \right) & = \sum_{m',i} \frac{\partial^2 \mathcal{L}_n}{\partial (h_{l'})_{i,m'} \partial (h_{l})_{a,m}} \frac{\partial (h_{l'})_{i,m'}}{\partial (W_{l'})_{c,d}}\\
& = \sum_{m'} \frac{\partial^2 \mathcal{L}_n}{\partial (h_{l'})_{c,m'} \partial (h_l)_{a,m}} (\bar{a}_{l'-1})_{d,m'} .
\end{split} \end{equation}
Here, as in \cite{bishop1992exact, popa2014exact}, we can assume $l \le l'$ by the symmetry of Hessian, so
\begin{equation}
\frac{\partial (\bar{a}_{l-1})_{b,m}}{\partial (W_{l'})_{c,d}} = 0 ,
\end{equation}
since $\bar{a}_{l-1}$ is a function of $W_1$, $W_2$, $\cdots$, $W_{l-1}$, but does not depend on $W_l$, $W_{l+1}$, $\cdots$, $W_L$.
Therefore,
\begin{equation}
\frac{\partial^2 \mathcal{L}_n}{\partial (W_l)_{a,b} \partial (W_{l'})_{c,d}} = \sum_{m,m'} (\bar{a}_{l-1})_{b,m} (\bar{a}_{l'-1})_{d,m'} \frac{\partial^2 \mathcal{L}_n}{\partial (h_l)_{a,m} \partial (h_{l'})_{c,m'}} .
\end{equation}

\subsection{Extended K-FAC} \label{app:bnsimple}

We divide the summands of
\begin{equation}
\mathbb{E}_{(x, t)} \left[ \mathbb{E}_n \left[ \sum_{m,m'} (\bar{a}_{l-1})_{b,m} (\bar{a}_{l'-1})_{d,m'} \frac{\partial^2 \mathcal{L}_n}{\partial (h_l)_{a,m} \partial (h_{l'})_{c,m'}} \right]\right]
\end{equation}
into the five groups so that it can be expressed as $G_1+G_2+G_3+G_4+G_5$.
For the derivation, we define
\begin{equation}
(\{\mathcal{H}'''\}_{l,l'})_{a,c} = \mathbb{E}_{(x, t)} \left[\mathbb{E}_n \left[  \sum_{m} \frac{\partial^2 \mathcal{L}_n}{\partial (h_l)_{a,n} \partial (h_{l'})_{c,m}} \right]\right] .
\end{equation}
Note that $\mathcal{H}'''$ is not symmetric in general unlike the others.
In addition, let $\pi$ and $\pi'$ be permutations of $\{1, 2, \cdots, N\}$ such that $n\neq \pi(n) \neq \pi'(n) \neq n$ for all $n$.

(i) $G_1$: $m=m'=n$
\begin{multline}
\mathbb{E}_{(x, t)} \left[ \mathbb{E}_n \left[  (\bar{a}_{l-1})_{b,n} (\bar{a}_{l'-1})_{d,n} \frac{\partial^2 \mathcal{L}_n}{\partial (h_l)_{a,n} \partial (h_{l'})_{c,n}} \right]\right]\\
= \mathbb{E}_{x} \left[ \mathbb{E}_n \left[  (\bar{a}_{l-1})_{b,n} (\bar{a}_{l'-1})_{d,n} \right]\right] \mathbb{E}_{(x, t)} \left[ \mathbb{E}_n \left[ \frac{\partial^2 \mathcal{L}_n}{\partial (h_l)_{a,n} \partial (h_{l'})_{c,n}} \right]\right]
= (\{\bar{A}\}_{l,l'})_{b,d} (\{\mathcal{H}\}_{l,l'})_{a,c} 
\end{multline}
\begin{equation}
\therefore G_1 = \bar{A} * \mathcal{H}
\end{equation}

(ii) $G_2$: $m=m' \neq n$
\begin{equation} \begin{split}
\mathbb{E}_{(x, t)} &\left[ \mathbb{E}_n \left[ \sum_{m(\neq n)} (\bar{a}_{l-1})_{b,m} (\bar{a}_{l'-1})_{d,m} \frac{\partial^2 \mathcal{L}_n}{\partial (h_l)_{a,m} \partial (h_{l'})_{c,m}} \right]\right]\\
&= (N-1) \mathbb{E}_{(x, t)} \left[ \mathbb{E}_n \left[ (\bar{a}_{l-1})_{b,\pi(n)} (\bar{a}_{l'-1})_{d,\pi(n)} \frac{\partial^2 \mathcal{L}_n}{\partial (h_l)_{a,\pi(n)} \partial (h_{l'})_{c,\pi(n)}} \right]\right] \\
& = \mathbb{E}_{x} \left[ \mathbb{E}_n \left[ (\bar{a}_{l-1})_{b,\pi(n)} (\bar{a}_{l'-1})_{d,\pi(n)} \right]\right]  \mathbb{E}_{(x, t)} \left[ \mathbb{E}_n \left[ (N-1)  \frac{\partial^2 \mathcal{L}_n}{\partial (h_l)_{a,\pi(n)} \partial (h_{l'})_{c,\pi(n)}} \right]\right]\\
& = (\{\bar{A}\}_{l,l'})_{b,d} \mathbb{E}_{(x, t)} \left[\mathbb{E}_n \left[  \sum_{m(\neq n)} \frac{\partial^2 \mathcal{L}_n}{\partial (h_l)_{a,m} \partial (h_{l'})_{c,m}} \right]\right]
\end{split} \end{equation}
since for any $m \neq n$,
\begin{equation}
\mathbb{E}_{(x, t)} \left[ \frac{\partial^2 \mathcal{L}_n}{\partial (h_l)_{a,\pi(n)} \partial (h_{l'})_{c,\pi(n)}} \right] = \mathbb{E}_{(x, t)} \left[  \frac{\partial^2 \mathcal{L}_n}{\partial (h_l)_{a,m} \partial (h_{l'})_{c,m}} \right] .
\end{equation}
Though $\mathbb{E}_{(x, t)} \left[ \mathbb{E}_n \left[ (N-1)  \frac{\partial^2 \mathcal{L}_n}{\partial (h_l)_{a,\pi(n)} \partial (h_{l'})_{c,\pi(n)}} \right]\right]$ and $\mathbb{E}_{(x, t)} \left[\mathbb{E}_n \left[  \sum_{m(\neq n)} \frac{\partial^2 \mathcal{L}_n}{\partial (h_l)_{a,m} \partial (h_{l'})_{c,m}} \right]\right]$ are equivalent, the latter is more efficient in estimating the expectation as all possible combinations in a single backward pass are considered.
We use this type of efficient reformulation for other groups as well.
\begin{equation} \begin{split}
\mathbb{E}_{(x, t)} & \left[ \mathbb{E}_n \left[ (N-1)  \frac{\partial^2 \mathcal{L}_n}{\partial (h_l)_{a,\pi(n)} \partial (h_{l'})_{c,\pi(n)}} \right]\right] = \mathbb{E}_{(x, t)} \left[\mathbb{E}_n \left[  \sum_{m(\neq n)} \frac{\partial^2 \mathcal{L}_n}{\partial (h_l)_{a,m} \partial (h_{l'})_{c,m}} \right]\right]\\
&= \mathbb{E}_{(x,t)} \left[ \mathbb{E}_n \left[   \sum_{m} \frac{\partial^2 \mathcal{L}_n}{\partial (h_l)_{a,m} \partial (h_{l'})_{c,m}} - \frac{\partial^2 \mathcal{L}_n}{\partial (h_l)_{a,n} \partial (h_{l'})_{c,n}} \right]\right]
= (\{\mathcal{H}' - \mathcal{H}\}_{l,l'})_{a,c}
\end{split} \end{equation}
\begin{equation}
\therefore G_2 = \bar{A} * (\mathcal{H}' - \mathcal{H})
\end{equation}

(iii) $G_3$: $m=n \neq m'$
\begin{equation} \begin{split}
\mathbb{E}_{(x, t)} & \left[ \mathbb{E}_n \left[ \sum_{m(\neq n)} (\bar{a}_{l-1})_{b,n} (\bar{a}_{l'-1})_{d,m} \frac{\partial^2 \mathcal{L}_n}{\partial (h_l)_{a,n} \partial (h_{l'})_{c,m}} \right]\right]\\
&= (N-1) \mathbb{E}_{(x, t)} \left[ \mathbb{E}_n \left[ (\bar{a}_{l-1})_{b,n} (\bar{a}_{l'-1})_{d,\pi(n)} \frac{\partial^2 \mathcal{L}_n}{\partial (h_l)_{a,n} \partial (h_{l'})_{c,\pi(n)}} \right]\right]\\
&= \mathbb{E}_{x} \left[ \mathbb{E}_n \left[ (\bar{a}_{l-1})_{b,n} (\bar{a}_{l'-1})_{d,\pi(n)}\right]\right]\mathbb{E}_{(x, t)} \left[ \mathbb{E}_n \left[ (N-1) \frac{\partial^2 \mathcal{L}_n}{\partial (h_l)_{a,n} \partial (h_{l'})_{c,\pi(n)}} \right]\right] \\
&= \mathbb{E}_{x} \left[ \mathbb{E}_n \left[  \mathbb{E}_{m(\neq n)} \left[  (\bar{a}_{l-1})_{b,n} (\bar{a}_{l'-1})_{d,m} \right]\right]\right] \mathbb{E}_{(x, t)} \left[\mathbb{E}_n \left[  \sum_{m(\neq n)} \frac{\partial^2 \mathcal{L}_n}{\partial (h_l)_{a,n} \partial (h_{l'})_{c,m}} \right]\right]
\end{split} \end{equation}
\begin{equation} \begin{split}
\mathbb{E}_{x} &\left[ \mathbb{E}_n \left[  \mathbb{E}_{m(\neq n)} \left[  (\bar{a}_{l-1})_{b,n} (\bar{a}_{l'-1})_{d,m} \right]\right]\right]\\
& = \mathbb{E}_{x} \left[ \mathbb{E}_n \left[ \frac{1}{N-1} \left( N \mathbb{E}_{m} \left[  (\bar{a}_{l-1})_{b,n} (\bar{a}_{l'-1})_{d,m} \right] - (\bar{a}_{l-1})_{b,n} (\bar{a}_{l'-1})_{d,n} \right)\right]\right]\\
& = \frac{1}{N-1} \left( N \mathbb{E}_x \left[ \mathbb{E}_n \left[(\bar{a}_{l-1})_{b,n} \right] \mathbb{E}_m \left[(\bar{a}_{l'-1})_{d,m} \right] \right] - (\{\bar{A}\}_{l,l'})_{b,d} \right)\\
& = \frac{1}{N-1} ( \{ N\bar{A}' - \bar{A} \}_{l,l'} )_{b,d}
\end{split} \end{equation}
\begin{equation} \begin{split}
\mathbb{E}_{(x, t)} &\left[\mathbb{E}_n \left[  \sum_{m(\neq n)} \frac{\partial^2 \mathcal{L}_n}{\partial (h_l)_{a,n} \partial (h_{l'})_{c,m}} \right]\right]\\
& = \mathbb{E}_{(x, t)} \left[\mathbb{E}_n \left[  \sum_{m} \frac{\partial^2 \mathcal{L}_n}{\partial (h_l)_{a,n} \partial (h_{l'})_{c,m}} - \frac{\partial^2 \mathcal{L}_n}{\partial (h_l)_{a,n} \partial (h_{l'})_{c,n}} \right]\right] = (\{\mathcal{H}''' - \mathcal{H}\}_{l,l'})_{a,c}
\end{split} \end{equation}
\begin{equation}
\therefore G_3 = \frac{1}{N-1}(N\bar{A}' - \bar{A}) * (\mathcal{H}''' - \mathcal{H})
\end{equation}

(iv) $G_4$: $m \neq n = m'$
\begin{multline}
\mathbb{E}_{(x, t)} \left[ \mathbb{E}_n \left[ \sum_{m(\neq n)} (\bar{a}_{l-1})_{b,m} (\bar{a}_{l'-1})_{d,n} \frac{\partial^2 \mathcal{L}_n}{\partial (h_l)_{a,m} \partial (h_{l'})_{c,n}} \right]\right] \\
= \mathbb{E}_{(x, t)} \left[ \mathbb{E}_n \left[ \sum_{m(\neq n)} (\bar{a}_{l'-1})_{d,n} (\bar{a}_{l-1})_{b,m} \frac{\partial^2 \mathcal{L}_n}{\partial (h_{l'})_{c,n} \partial (h_{l})_{a,m}} \right]\right]
\end{multline}
\begin{equation}
\therefore G_4 = G_3^\top = \frac{1}{N-1}(N\bar{A}' - \bar{A}) * (\mathcal{H}'''^\top - \mathcal{H})
\end{equation}

(v) $G_5$: $n \neq m \neq m' \neq n$
\begin{equation} \begin{split}
\mathbb{E}_{(x, t)} &\left[ \mathbb{E}_n \left[ \sum_{\substack{m,m'\\(n \neq m \neq m' \neq n)}} (\bar{a}_{l-1})_{b,m} (\bar{a}_{l'-1})_{d,m'} \frac{\partial^2 \mathcal{L}_n}{\partial (h_l)_{a,m} \partial (h_{l'})_{c,m'}} \right]\right]\\
&= (N-1)(N-2) \mathbb{E}_{(x, t)} \left[ \mathbb{E}_n \left[ (\bar{a}_{l-1})_{b,\pi(n)} (\bar{a}_{l'-1})_{d,\pi'(n)} \frac{\partial^2 \mathcal{L}_n}{\partial (h_l)_{a,\pi(n)} \partial (h_{l'})_{c,\pi'(n)}} \right]\right]\\
&= \mathbb{E}_{x} \left[ \mathbb{E}_n \left[ (\bar{a}_{l-1})_{b,\pi(n)} (\bar{a}_{l'-1})_{d,\pi'(n)} \right]\right] \mathbb{E}_{(x, t)} \left[ \mathbb{E}_n \left[ (N-1)(N-2) \frac{\partial^2 \mathcal{L}_n}{\partial (h_l)_{a,\pi(n)} \partial (h_{l'})_{c,\pi'(n)}} \right]\right] \\
&= \mathbb{E}_{x} \left[ \mathbb{E}_n \left[  \mathbb{E}_{\substack{m,m'\\(n \neq m \neq m' \neq n)}} \left[  (\bar{a}_{l-1})_{b,m} (\bar{a}_{l'-1})_{d,m'} \right]\right]\right] \mathbb{E}_{(x, t)} \left[\mathbb{E}_n \left[  \sum_{\substack{m,m'\\(n \neq m \neq m' \neq n)}} \frac{\partial^2 \mathcal{L}_n}{\partial (h_l)_{a,m} \partial (h_{l'})_{c,m'}} \right]\right]
\end{split} \end{equation}
\begin{equation} \begin{split}
\mathbb{E}_{x} &\left[ \mathbb{E}_n \left[  \mathbb{E}_{\substack{m,m'\\(n \neq m \neq m' \neq n)}} \left[  (\bar{a}_{l-1})_{b,m} (\bar{a}_{l'-1})_{d,m'} \right]\right]\right]\\
& = \mathbb{E}_x \bigg[ \mathbb{E}_n \bigg[ \frac{1}{(N-1)(N-2)} \Big(  N^2 \mathbb{E}_{m,m'} [(\bar{a}_{l-1})_{b,m} (\bar{a}_{l'-1})_{d,m'}] \\
& \qquad - N \mathbb{E}_m [(\bar{a}_{l-1})_{b,n} (\bar{a}_{l'-1})_{d,m}] - N \mathbb{E}_m [(\bar{a}_{l-1})_{b,m} (\bar{a}_{l'-1})_{d,n}] + (\bar{a}_{l-1})_{b,n} (\bar{a}_{l'-1})_{d,n} \\
& \qquad - N \mathbb{E}_m [(\bar{a}_{l-1})_{b,m} (\bar{a}_{l'-1})_{d,m}] + (\bar{a}_{l-1})_{b,n} (\bar{a}_{l'-1})_{d,n} \Big)\bigg]\bigg]\\
& = \frac{1}{(N-1)(N-2)} (\{(N^2-2N)\bar{A}' - (N-2)\bar{A}\}_{l,l'})_{b,d} = \frac{1}{N-1} ( \{ N\bar{A}' - \bar{A} \}_{l,l'} )_{b,d}
\end{split} \end{equation}
\begin{equation} \begin{split}
\mathbb{E}_{(x, t)} & \left[\mathbb{E}_n \left[  \sum_{\substack{m,m'\\(n \neq m \neq m' \neq n)}} \frac{\partial^2 \mathcal{L}_n}{\partial (h_l)_{a,m} \partial (h_{l'})_{c,m'}} \right]\right]\\
&= \mathbb{E}_{(x,t)} \Bigg[ \mathbb{E}_n \Bigg[   \sum_{m,m'} \frac{\partial^2 \mathcal{L}_n}{\partial (h_l)_{a,m} \partial (h_{l'})_{c,m'}}\\
& \qquad - \sum_{m} \frac{\partial^2 \mathcal{L}_n}{\partial (h_l)_{a,n} \partial (h_{l'})_{c,m}} - \sum_{m} \frac{\partial^2 \mathcal{L}_n}{\partial (h_l)_{a,m} \partial (h_{l'})_{c,n}} + \frac{\partial^2 \mathcal{L}_n}{\partial (h_l)_{a,n} \partial (h_{l'})_{c,n}}\\
& \qquad - \sum_{m} \frac{\partial^2 \mathcal{L}_n}{\partial (h_l)_{a,m} \partial (h_{l'})_{c,m}} + \frac{\partial^2 \mathcal{L}_n}{\partial (h_l)_{a,n} \partial (h_{l'})_{c,n}}  \Bigg] \Bigg]\\
& = (\{\mathcal{H}'' - \mathcal{H}''' - \mathcal{H}'''^\top - \mathcal{H}' + 2\mathcal{H}\}_{l,l'})_{a,c}
\end{split} \end{equation}
\begin{equation}
\therefore G_5 = \frac{1}{N-1}(N\bar{A}' - \bar{A}) * (\mathcal{H}'' - \mathcal{H}''' - \mathcal{H}'''^\top - \mathcal{H}' + 2\mathcal{H})
\end{equation}

Therefore, for $N>2$,
\begin{equation} \begin{split}
H & = G_1 + G_2 + G_3 + G_4 + G_5\\
& = \bar{A} * \mathcal{H} + \bar{A} * (\mathcal{H}' - \mathcal{H}) + \frac{1}{N-1}(N\bar{A}' - \bar{A}) * (\mathcal{H}''' + \mathcal{H}'''^\top - 2\mathcal{H}) \\
& \qquad + \frac{1}{N-1}(N\bar{A}' - \bar{A}) * (\mathcal{H}'' - \mathcal{H}''' - \mathcal{H}'''^\top - \mathcal{H}' + 2\mathcal{H}) \\
& = \bar{A} * \mathcal{H}' + \frac{1}{N-1} \left( N \bar{A}' -  \bar{A} \right) * (\mathcal{H}'' - \mathcal{H}')
\end{split} \end{equation}

For $N=2$,
\begin{equation} \begin{split}
H = G_1 + G_2 + G_3 + G_4 & = \bar{A} * \mathcal{H}' + \frac{1}{N-1}(N\bar{A}' - \bar{A}) * (\mathcal{H}''' + \mathcal{H}'''^\top - 2\mathcal{H})\\
& = \bar{A} * \mathcal{H}' + \frac{1}{N-1}(N\bar{A}' - \bar{A}) * (\mathcal{H}'' - \mathcal{H}')
\end{split} \end{equation}
since
\begin{multline}
(\{\mathcal{H}'' - \mathcal{H}'\}_{l,l'})_{a,c} = (\{\mathcal{H}''' + \mathcal{H}'''^\top - 2\mathcal{H}\}_{l,l'})_{a,c} \\
= \frac{1}{2} \mathbb{E}_{(x,t)} \left[ \frac{\partial^2 \mathcal{L}_1}{\partial (h_l)_{a,1} \partial (h_{l'})_{c,2}} + \frac{\partial^2 \mathcal{L}_1}{\partial (h_l)_{a,2} \partial (h_{l'})_{c,1}} + \frac{\partial^2 \mathcal{L}_2}{\partial (h_l)_{a,1} \partial (h_{l'})_{c,2}} + \frac{\partial^2 \mathcal{L}_2}{\partial (h_l)_{a,2} \partial (h_{l'})_{c,1}} \right] .
\end{multline}

For $N=1$,
\begin{equation}
H = G_1 = \bar{A} * \mathcal{H} = \bar{A} * \mathcal{H}' + \left( N \bar{A}' -  \bar{A} \right) * (\mathcal{H}'' - \mathcal{H}')
\end{equation}
since $\mathcal{H} = \mathcal{H}' = \mathcal{H}''$.

\subsection{Positive semi-definiteness of XK-FAC} \label{app:psd}

If we denote the $n$-th column vector of $\bar{a}_{l-1}$ by $(\bar{a}_{l-1})_{:,n}$, then
\begin{equation}
\{\bar{A}\}_{l,l'} = \mathbb{E}_x [\mathbb{E}_n [(\bar{a}_{l-1})_{:,n} (\bar{a}_{l'-1})_{:,n}^\top]] ,
\end{equation}
so
\begin{equation}
\bar{A} = \mathbb{E}_x [\mathbb{E}_n [(\bar{a}_{0:L-1})_{:,n} (\bar{a}_{0:L-1})_{:,n}^\top]] \succeq 0
\label{eq:appA}
\end{equation}
where $(\bar{a}_{0:L-1})_{:,n} = \begin{bmatrix} (\bar{a}_0)_{:,n}^\top & (\bar{a}_1)_{:,n}^\top & \cdots & (\bar{a}_{L-1})_{:,n}^\top \end{bmatrix}^\top$.
For $\bar{A}'$,
\begin{equation}
\bar{A}' = \mathbb{E}_x [\mathbb{E}_n [(\bar{a}_{0:L-1})_{:,n}] \mathbb{E}_n [(\bar{a}_{0:L-1})_{:,n}]^\top ] \succeq 0 .
\label{eq:appA'}
\end{equation}
Also,
\begin{equation}
\bar{A} - \bar{A}' = \mathbb{E}_x [\mathbb{E}_n [(\bar{a}_{0:L-1})_{:,n} (\bar{a}_{0:L-1})_{:,n}^\top] - \mathbb{E}_n [(\bar{a}_{0:L-1})_{:,n}] \mathbb{E}_n [(\bar{a}_{0:L-1})_{:,n}]^\top ] \succeq 0
\end{equation}
since it is an expectation of covariance matrices.
Thus,
\begin{equation}
\bar{A} \succeq 0, \quad \bar{A}' \succeq 0, \quad \bar{A} - \bar{A}' \succeq 0 .
\end{equation}

Similarly, $\hat{\mathcal{H}}' \succeq 0$, $\hat{\mathcal{H}}'' \succeq 0$, and $N\hat{\mathcal{H}}' - \hat{\mathcal{H}}'' \succeq 0$, because
\begin{align}
\hat{\mathcal{H}}' & = \mathbb{E}_{(x,y)} \left[ \mathbb{E}_n \left[ N \mathbb{E}_m \left[ \frac{\partial \mathcal{L}_n}{\partial (h_{1:L})_{:,m}}  \frac{\partial \mathcal{L}_n}{\partial (h_{1:L})_{:,m}}^\top \right] \right] \right] \succeq 0 , 
\label{eq:appH'}\\
\hat{\mathcal{H}}'' & = \mathbb{E}_{(x,y)} \left[ \mathbb{E}_n \left[ N^2 \mathbb{E}_m \left[ \frac{\partial \mathcal{L}_n}{\partial (h_{1:L})_{:,m}} \right] \mathbb{E}_m \left[ \frac{\partial \mathcal{L}_n}{\partial (h_{1:L})_{:,m}} \right]^\top \right] \right] \succeq 0 , 
\label{eq:appH''}\\
N\hat{\mathcal{H}}' - \hat{\mathcal{H}}'' & = N^2 \mathbb{E}_{(x,y)} \Bigg[ \mathbb{E}_n \Bigg[ \mathbb{E}_m \left[ \frac{\partial \mathcal{L}_n}{\partial (h_{1:L})_{:,m}}  \frac{\partial \mathcal{L}_n}{\partial (h_{1:L})_{:,m}}^\top \right] - \mathbb{E}_m \left[ \frac{\partial \mathcal{L}_n}{\partial (h_{1:L})_{:,m}} \right] \mathbb{E}_m \left[ \frac{\partial \mathcal{L}_n}{\partial (h_{1:L})_{:,m}} \right]^\top \Bigg] \Bigg] \succeq 0 .
\end{align}

Therefore, for $N\ge 2$,
\begin{equation}
H = \bar{A} * \hat{\mathcal{H}}' + \frac{1}{N-1} \left( N \bar{A}' -  \bar{A} \right) * (\hat{\mathcal{H}}'' - \hat{\mathcal{H}}') = \frac{1}{N-1}(\bar{A}-\bar{A}')*(N\hat{\mathcal{H}}'-\hat{\mathcal{H}}'') + \bar{A}'*\hat{\mathcal{H}}'' \succeq 0
\end{equation}
since the Khatri--Rao product of two symmetrically partitioned positive semi-definite matrices is positive semi-definite~\cite{liu1999matrix, zhang2002inequalities}.
For $N=1$, $H = \bar{A}*\hat{\mathcal{H}}' \succeq 0$.

Also, this proof reveals how to implement XK-FAC efficiently.
Computing each block matrices in XK-FAC, $\bar{A}$, $\bar{A}'$, $\hat{\mathcal{H}}'$, and $\hat{\mathcal{H}}''$, just requires some averages and matrix-matrix multiplications (Equations~\ref{eq:appA}, \ref{eq:appA'}, \ref{eq:appH'}, \ref{eq:appH''}).
Thus, XK-FAC can be implemented very easily using any basic linear algebra subprograms (BLAS).

\subsection{Combining XK-FAC and KFC} \label{app:kfc}

For the $l$-th convolutional layer, let $a_{l-1}$ of size $C_{l-1} \times (S_{l-1} N)$ be an input to the layer and $W_{l}$ of size $C_{l} \times (C_{l-1} K_l + 1)$ be the weight, where $S$ and $K$ represent the flattened spatial dimension and kernel dimension, respectively.
A convolution operation can be converted to a matrix-matrix multiplication by unrolling the input~\cite{chellapilla2006high} (this unrolling function is often called \texttt{im2col}).
Let $\bm{a}_{l-1}$ of size $(C_{l-1} K_l) \times (S_l N)$ be the unrolled input of $a_{l-1}$ (denoted by $\llbracket\cdot\rrbracket$ in \cite{grosse2016kronecker}) and $\bar{\bm{a}}_{l-1}$ of size $(C_{l-1} K_l + 1) \times (S_l N)$ be the unrolled input with homogeneous dimension appended (denoted by $\llbracket\cdot\rrbracket_H$ in \cite{grosse2016kronecker}).
Then, the output $h_l$ of size $C_l \times (S_l N)$ is
\begin{equation}
h_l = W_l \bar{\bm{a}}_{l-1} .
\end{equation}
Thus, if the Hessian is approximated by the Fisher information matrix, then Equation~\ref{eq:hessian} becomes
\begin{equation}
\mathbb{E}_{(x, y)} \left[ \mathbb{E}_n \left[ \sum_{s,s',m,m'} (\bar{\bm{a}}_{l-1})_{b,(s,m)} (\bar{\bm{a}}_{l-1})_{d,(s',m')} \frac{\partial \mathcal{L}_n}{\partial (h_l)_{a,(s,m)}} \frac{\partial \mathcal{L}_n}{\partial (h_{l})_{c,(s',m')}} \right]\right] ,
\end{equation}
where $s$ and $s'$ index the spatial location.

KFC assumes three conditions: IAD, SH, and SUD, and these conditions can be straightforwardly extended for a different mini-batches case.
If we apply KFC for each $(m, m')$, we get
\begin{equation}
\sum_{m,m'} \left(\sum_{s} \mathbb{E}_{x} \left[ \mathbb{E}_n \left[  (\bar{\bm{a}}_{l-1})_{b,(s,m)} (\bar{\bm{a}}_{l-1})_{d,(s,m')} \right] \right] \right) \left( \frac{1}{S_l} \sum_{s} \mathbb{E}_{(x, y)} \left[ \mathbb{E}_n \left[ \frac{\partial \mathcal{L}_n}{\partial (h_l)_{a,(s,m)}} \frac{\partial \mathcal{L}_n}{\partial (h_{l})_{c,(s,m')}} \right]\right] \right) .
\end{equation}
Now, the $N^2$ summands here can also be divided into the five groups, and the remaining processes are exactly the same as Appendix~\ref{app:bnsimple}.
Therefore,
\begin{equation}
\{H\}_{l,l} = \{\bar{A}\}_{l,l} \otimes \{\hat{\mathcal{H}}'\}_{l,l} + \frac{1}{\textnormal{max}(N-1, 1)} \left( N \{\bar{A}'\}_{l,l} -  \{\bar{A}\}_{l,l} \right) \otimes (\{\hat{\mathcal{H}}''\}_{l,l} - \{\hat{\mathcal{H}}'\}_{l,l}) ,
\end{equation}
where
\begin{align}
(\{\bar{A}\}_{l,l})_{b,d} & = \sum_{s} \mathbb{E}_{x} \left[ \mathbb{E}_n \left[  (\bar{\bm{a}}_{l-1})_{b,(s,n)} (\bar{\bm{a}}_{l-1})_{d,(s,n)} \right] \right] , \\
(\{\bar{A}'\}_{l,l})_{b,d} & = \sum_{s} \mathbb{E}_{x} \left[ \mathbb{E}_n \left[  (\bar{\bm{a}}_{l-1})_{b,(s,n)} \right] \mathbb{E}_n \left[ (\bar{\bm{a}}_{l-1})_{d,(s,n)} \right] \right] , \\
(\{\hat{\mathcal{H}}'\}_{l,l})_{a,c} & = \frac{1}{S_l} \sum_{s} \mathbb{E}_{(x, y)} \left[ \mathbb{E}_n \left[ \sum_{m} \frac{\partial \mathcal{L}_n}{\partial (h_l)_{a,(s,m)}} \frac{\partial \mathcal{L}_n}{\partial (h_{l})_{c,(s,m)}} \right]\right] , \\
(\{\hat{\mathcal{H}}''\}_{l,l})_{a,c} & = \frac{1}{S_l} \sum_{s} \mathbb{E}_{(x, y)} \left[ \mathbb{E}_n \left[ \left( \sum_{m} \frac{\partial \mathcal{L}_n}{\partial (h_l)_{a,(s,m)}} \right) \left( \sum_{m} \frac{\partial \mathcal{L}_n}{\partial (h_{l})_{c,(s,m)}} \right) \right]\right] ,
\end{align}
and $\otimes$ is the Kronecker product.

\subsection{Effect of $\alpha_s$ and $\alpha_t$} \label{app:res}

\begin{figure*}[ht]
\centering
\includegraphics[width=0.96\linewidth,page=1]{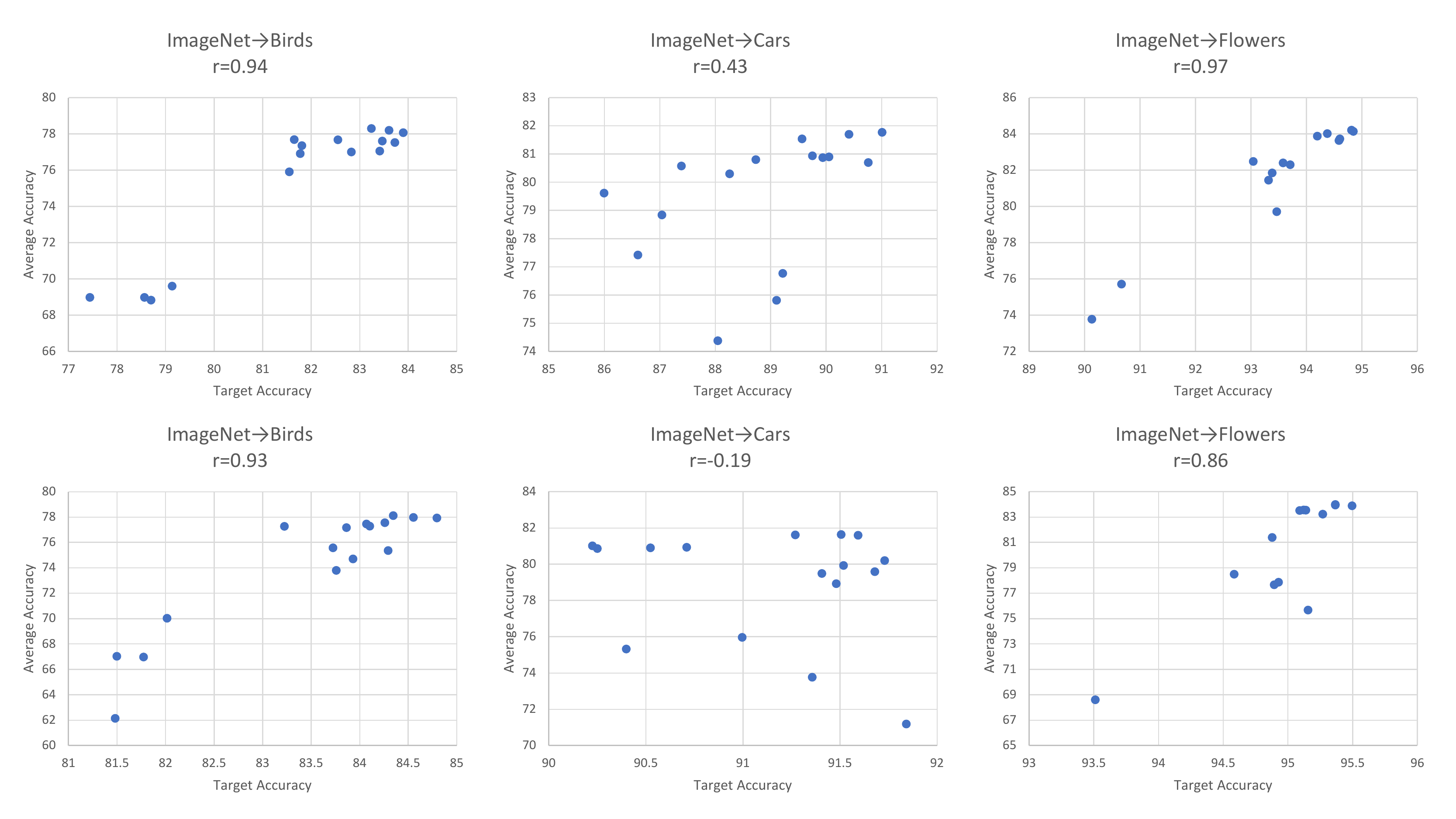}
\caption{
Each point represents the result of a specific hyperparameters (learning rate, damping) setting.
$\alpha_s$ and $\alpha_t$ are used in the top graphs, and they are not used in the bottom graphs.
With $\alpha_s$ and $\alpha_t$, the target accuracy and average accuracy have a more positive correlation.
}
\end{figure*}
\begin{figure*}[ht]
\centering
\includegraphics[width=0.96\linewidth,page=2]{supp.pdf}
\caption{
Each point represents the result of a specific hyperparameters (learning rate, damping) setting.
$\alpha_s$ and $\alpha_t$ are used in the top graphs, and they are not used in the bottom graphs.
With $\alpha_s$ and $\alpha_t$, the target accuracy and average accuracy have a more positive correlation.
}
\end{figure*}
\begin{figure*}[ht]
\centering
\includegraphics[width=0.96\linewidth,page=3]{supp.pdf}
\caption{
Each point represents the result of a specific hyperparameters (learning rate, damping) setting.
$\alpha_s$ and $\alpha_t$ are used in the top graphs, and they are not used in the bottom graphs.
With $\alpha_s$ and $\alpha_t$, the target accuracy and average accuracy have a more positive correlation.
}
\end{figure*}
\begin{figure*}[ht]
\centering
\includegraphics[width=0.96\linewidth,page=4]{supp.pdf}
\caption{
Each point represents the result of a specific hyperparameters (learning rate, damping) setting.
$\alpha_s$ and $\alpha_t$ are used in the top graphs, and they are not used in the bottom graphs.
With $\alpha_s$ and $\alpha_t$, the target accuracy and average accuracy have a more positive correlation.
}
\end{figure*}
\begin{figure*}[ht]
\centering
\includegraphics[width=0.96\linewidth,page=5]{supp.pdf}
\caption{
Each point represents the result of a specific hyperparameters (learning rate, damping) setting.
$\alpha_s$ and $\alpha_t$ are used in the top graphs, and they are not used in the bottom graphs.
With $\alpha_s$ and $\alpha_t$, the target accuracy and average accuracy have a more positive correlation.
}
\end{figure*}

\end{document}